\documentclass[journal]{IEEEtran}
\usepackage{amsmath,amsfonts,amssymb}
\usepackage{nomencl}
\usepackage{amsthm}
\usepackage{array}
\usepackage{slashed}
\usepackage{tikz}
\usepackage[linesnumbered,ruled,vlined]{algorithm2e}
\usepackage[caption=false,font=normalsize,labelfont=sf,textfont=sf]{subfig}
\usepackage{textcomp}
\usepackage[numbers,sort&compress]{natbib}
\usepackage{stfloats}
\usepackage{enumerate}
\usepackage{booktabs}
\usepackage{tabularx}
\usepackage{array}
\usepackage{makecell}
\usepackage{multirow}
\usepackage{enumitem}
\usepackage{url}
\usepackage{soul}
\usepackage{color, xcolor}
\usepackage{verbatim}
\theoremstyle{definition}
\usepackage{graphicx}
\newtheorem{definition}{Definition}
\newtheorem{example}{Example}

\newtheorem{remark}{Remark}
\newtheorem{theorem}{Theorem}
\newcommand{\tabincell}[2]{\begin{tabular}{@{}#1@{}}#2\end{tabular}}
\newcommand{\circledsmall}[1]{\hbox{\tikz\draw (0pt, 0pt)
    circle (.45em) node {\makebox[0.15em][c]{\scriptsize#1}};}}
\newcommand{\circledtiny}[1]{\hbox{\tikz\draw (0pt, 0pt)
    circle (.35em) node {\makebox[0.15em][c]{\tiny#1}};}}
    \newcommand{\circledlarge}[1]{\hbox{\tikz\draw (0pt, 0pt)
    circle (.6em) node {\makebox[0.15em][c]{\normalsize#1}};}}

\ifCLASSINFOpdf

\else

\fi

\begin{document}

\title{Evidential Information Fusion on Possibilistic Structure}

\author{Anonymous}

\author{Qianli Zhou,
        Ye Cui,
        Zhen Li,
        Witold Pedrycz \IEEEmembership{Life~Fellow,~IEEE},
        Yong Deng
\thanks{Qianli Zhou is with School of Electronics and Information, Northwestern Polytechnical University, Xi'an 710072, China, also with the Department of Electrical and Computer Engineering, University of Alberta, Edmonton, AB T6G 1H9, Canada. }
\thanks{Ye Cui is with the Department of Electrical and Computer Engineering, University of Alberta, Edmonton, AB T6G 1H9, Canada.}
\thanks{Zhen Li is with the China Mobile Information Technology Center, Beijing, 100029, China.}
\thanks{Witold Pedrycz is with the Department of Electrical and Computer Engineering, University of Alberta, Edmonton, AB T6G 2R3, Canada, also with the Systems Research Institute, Polish Academy of Sciences, 00-901 Warsaw, Poland, and Istinye University, Faculty of Engineering and Natural Sciences, Department of Computer Engineering, Sariyer/Istanbul, Turkiye (e-mail: wpedrycz@ualberta.ca).}
\thanks{Yong Deng is with the Institute of Fundamental and Frontier Science, University of Electronic Science and Technology of China, Chengdu 611731, China. (e-mail: dengentropy@uestc.edu.cn)}
}
\markboth{Journal of \LaTeX\ Class Files,~Vol.~14, No.~8, August~2015}%
{Shell \MakeLowercase{\textit{et al.}}: Bare Demo of IEEEtran.cls for IEEE Journals}

\maketitle

\begin{abstract}
Dempster’s rule is a fundamental tool for combining belief functions from distinct and reliable sources. However, its intersection-based semantics imposes strong structural restrictions, which limits its flexibility in handling complex source states and diverse information fusion scenarios. To overcome this limitation, we propose a reversible transformation, derived from the isopignistic principle, between belief functions and a possibilistic structure defined on the power set. In this transformation, the relationships among subsets are explicitly characterized by a belief evolution network, which provides a more flexible representation of evidential information beyond the conventional mass function structure. On this basis, we further introduce the triangular norm family to develop a general and adaptive evidential information fusion framework. Unlike fusion methods rooted in Dempster semantics, the proposed framework supports more flexible combination behaviors and exhibits advantages in non-distinct source fusion, conflict management, parametric combination design, and heterogeneous information fusion.
\end{abstract}

\begin{IEEEkeywords}
information fusion, Dempster-Shafer theory, isopignistic canonical decomposition, belief evolution network, triangular norm
\end{IEEEkeywords}

\IEEEpeerreviewmaketitle

\section{Introduction}
\label{intro}

\IEEEPARstart{I}{nformation} fusion plays a central role in uncertain information processing. Among the existing approaches, Dempster's rule \cite{yager2008classic} is one of the most representative tools in Dempster-Shafer theory (DST) for combining belief functions from distinct and reliable sources. Because of its rigorous foundation and consistency with probabilistic information, it has been extensively used in multi-source information fusion \cite{yang2025multimodal,LI2026novel,liang2025assessment}, decision making \cite{zhang2023bsc,yang2025maximum,gao2025information}, social network analysis \cite{wen2024eriue,zhao2024mase}, and risk assessment \cite{huang2024merging,lian2025linguistic,huang2025dynamic}. For instance, Huang \textit{et al.} \cite{huang2024integration,huang2026incomplete} proposed a compelling evidential framework for incomplete-data classification, where multiple imputation methods are integrated with Dempster's rule to mitigate distribution shift and improve classification performance. However, when the source assumptions required by Dempster's rule are violated, particularly under severe evidence conflict, its direct application may produce counter-intuitive fusion results \cite{kang2024deceptive}. Although many variants and improved combination rules have been developed for conflict management \cite{zhou2024generalized,deng2025cross,dong2026quantum}, most of them still operate within the semantic framework induced by Dempster's rule. As a result, their ability to accommodate diverse source states, flexible combination behaviors, and heterogeneous information structures remains limited.

Exploring the combination rules under different states of sources is a significant issue. When the sources are distinct and at least one is reliable, Smets (see \cite{yager2008classic}, Chapter 25) proposes the disjunctive combination rule (DCR). The exploration of combination rules for sources with more general states can be divided into two aspects. On the one hand, from the viewpoint of matrix calculus, the CCR and DCR can be regarded as the specialization and generalization of mass functions \cite{smets2002application}. This concept is further extended to the $\alpha$-junction in evidence combination rules to accommodate states quantified from the perspective of truthfulness \cite{pichon2012alpha}. On the other hand, the diffidence function obtained from the canonical decomposition of a mass function \cite{dubois2020prejudice} is used as the medium to propose the cautious combination rule (CauCR) and bold combination rule (BCR) \cite{denoeux2008conjunctive}. CauCR and BCR aim to resolve the combination of bodies of evidence from non-distinct sources. However, only non-dogmatic mass functions can generate diffidence functions, which prevents these extensions from being directly applied to probability distributions. \textbf{Therefore, the first motivation of this paper is to establish an information fusion framework for non-distinct sources that is applicable to the entire belief function domain.}
 
Possibility theory (PossT), an effective tool for representing incomplete information based on membership functions \cite{li2024basic,li2025z}, is often discussed alongside belief functions. Its contour function can be viewed as a possibility distribution \cite{destercke2011idempotent}, which reflects the propensity of the mass function \cite{zhou2022modeling}. From the perspective of the consonant mass function, when the belief structure is nested, it can be reversibly transformed to a possibility distribution, which is the most widely accepted view of the relationship between DST and PossT \cite{dubois2001new}. In \cite{dubois2008definition}, Dubois \textit{et al.} have demonstrated that the consonant mass function represents the least committed case within its corresponding isopignistic domain. Here, the isopignistic domain refers to the set of belief functions with equal outcomes in pignistic probability transformation (PPT). Thus, in addition to the contour function, there is another correspondence between the possibility distribution and the belief function, where the possibility distribution is induced by the consonant mass function in the isopignistic domain of the belief function. Based on the above correspondence, Smets \cite{smets2000theorie} proposes the hyper-cautious transferable belief model, and the minimum t-norm in possibilistic information fusion is introduced into the belief function from the perspective of the least commitment principle. However, this rule is only applicable to the consonant mass function. In \cite{denoeux2008conjunctive}, executing minimum t-norm on the commonality function has been tried on general mass functions, but the fused outcome cannot be guaranteed to be transformed to a viable mass function. Hence, an alternative method is the diffidence mass function \cite{dubois2020prejudice}. Due to the presence of doubt beliefs, the diffidence function may exceed $1$, which prevents fusion operators designed for possibility distributions from being extended to belief functions directly. \textbf{Therefore, the second motivation of this paper is to provide a pathway for combining evidential information on possibilistic structure to realize flexible combination rules via the triangular norm family.}

In this paper, we first employ the belief evolution network \cite{zhou2022belief} to characterize the structural relationships among subsets within the power set information distribution. Building on this foundation, we propose a reversible evidential information representation based on the isopignistic transformation \cite{zhou2024isopignistic}, which permits any subset of the power set to take values in $[0,1]$ without structural constraints, while ensuring correspondence to a unique valid belief function. Furthermore, we incorporate the triangular norm family as a flexible and versatile possibilistic fusion operator within the proposed representation, thereby realizing evidential information fusion on a possibilistic structure. Compared with Dempster’s semantics, the proposed framework exhibits superior performance in handling non-distinct source fusion, conflict management, parametric combination rules, and heterogeneous information fusion.

The structure of the paper is as follows. Section \ref{pre} introduces the fundamental concepts of combination rules. Section \ref{iso} presents a novel representation of belief functions using isopignistic canonical decomposition. In Section \ref{hcrms}, this representation is extended into an evidential information fusion framework. Section \ref{prop} discusses the fundamental properties and advantages of the proposed combination rules. Section \ref{sec:parametric-pecr} further generalizes the combination rules using parametric t-norms. Section \ref{con} concludes the paper by summarizing the contributions and limitations of this work.

\section{Preliminaries}
\label{pre}

\subsection{Basic probability assignment}
Consider an uncertain variable $X$ that takes values in a finite frame of discernment (FoD) $\Omega=\{\omega_1,\cdots,\omega_n\}$. A mass function of $X$, called a basic probability assignment (BPA), is a mapping $m:2^\Omega\rightarrow[0,1]$ satisfying $\sum_{F_i\subseteq\Omega}m(F_i)=1$ \cite{yager2008classic}. Each $F_i$ is a subset of $\Omega$, where $i\in\{0,\cdots,2^n-1\}$ denotes its binary index. A subset $F_i$ with $m(F_i)>0$ is called a focal set. Several special cases are often considered: when $m(\emptyset)>0$, $m$ is an unnormalized mass function; when $m(\Omega)=1$, $m$ is the vacuous mass function, representing total ignorance; when $m(\emptyset)=1$, $m$ is the empty mass function; when all focal sets are nested, $m$ is a consonant mass function, corresponding to a possibility distribution.

\subsection{Equivalent representations}
Since the uncertainty of information distributions on the power set arises not only from values but also from propositions \cite{su2026dependence,zhan2026ternary}, the specific belief about $X$ cannot be directly reflected. Instead, several equivalent representations have been proposed, which can be reversibly transformed into a mass function:
\begin{small}
\begin{equation}\label{belief_function_eq}
			\begin{aligned}
			& Bel(F_i) = \sum_{\emptyset \neq F_j \subseteq F_i} m(F_j), 
		 b(F_i) = Bel(F_i)+m(\emptyset) = \overline{q}(\overline{F_i}), \\
			& Pl(F_i) = \sum_{F_j \cap F_i \neq \emptyset} m(F_j), 
			\sigma(F_i) = \prod_{F_i \subseteq F_j} q(F_j)^{(-1)^{|F_j|-|F_i|-1}}, \\
			& q(F_i) = \sum_{F_i \subseteq F_j} m(F_j), 
			v(F_i) = \prod_{F_j \subseteq F_i} b(F_j)^{(-1)^{|F_i|-|F_j|-1}}.
		\end{aligned}
\end{equation}
\end{small}
where the calculation of $\sigma$ and $v$ is also called canonical decomposition of mass functions \cite{denoeux2008conjunctive}.

\subsection{Evidence combination rules}
Combination rules \cite{pichon2014consistency} are commonly considered depending on the dependency and reliability of sources: the conjunctive combination rule (CCR, $\circledsmall{$\cap$}$) \footnote{Also known as the unnormalized Dempster's rule}, the cautious conjunctive rule (CauCR, $\circledsmall{$\wedge$}$), the disjunctive combination rule (DCR, $\circledsmall{$\cup$}$), and the bold conjunctive rule (BCR, $\circledsmall{$\vee$}$) \cite{denoeux2008conjunctive}. The specific formulas and corresponding states are shown in Table \ref{tab:ecr_states}, where ${F_i}^{\sigma}$ and ${F_i}_{v}$ are simple mass function and its duality, ${F_i}^{\sigma}\equiv\{m(F_i)=1-\sigma,\,m(\Omega)=\sigma\}$ and  ${F_i}_{v}\equiv\{m(F_i)=1-v,\,m(\emptyset)=v\}$.

\begin{table}[htbp!]
	\centering
	\caption{Combination rules under different source states}
	\label{tab:ecr_states}
	\renewcommand{\arraystretch}{1.22}
	\newcommand{\RuleCell}[2]{#1: $#2$}
	\begin{tabularx}{\columnwidth}{
			@{}
			>{\centering\arraybackslash}p{0.10\columnwidth}|
			>{\centering\arraybackslash}X
			>{\centering\arraybackslash}X
			@{}
		}
		\toprule
		State 
		& All reliable 
		& At least one reliable \\
		\midrule
		Distinct
		&
		\RuleCell{CCR}{
			\circledlarge{$\cap$}_{F\subset\Omega}
			F^{\sigma_1(F)\cdot\sigma_2(F)}
		}
		&
		\RuleCell{DCR}{
			\circledlarge{$\cup$}_{F\neq\emptyset}
			F_{v_1(F)\cdot v_2(F)}
		}
		\\
		\midrule
		Non-distinct
		&
		\RuleCell{CauCR}{
			\circledlarge{$\cap$}_{F\subset\Omega}
			F^{\min(\sigma_1(F),\sigma_2(F))}
		}
		&
		\RuleCell{BCR}{
			\circledlarge{$\cup$}_{F\neq\emptyset}
			F_{\min(v_1(F),v_2(F))}
		}
		\\
		\bottomrule
	\end{tabularx}
\end{table}

\section{Representing belief functions on possibilistic structure}
\label{iso}

\subsection{Motivation}

As discussed in Section \ref{intro}, this paper is driven by two motivations: to construct an evidential information fusion framework that can handle non-distinct sources, and to enable this framework to operate on a possibilistic structure through operators from the triangular norm family. To this end, a fundamental requirement is to establish an information-preserving and reversible representation that maps a belief function into a possibilistic form while still allowing the reconstruction of a valid belief function \cite{huang2026individual}.

The propositions in a belief structure are not mutually exclusive, and the assignment and updating of mass values must satisfy structural constraints among subsets. This makes it difficult to directly apply point-wise triangular norms to belief functions, since such operators are designed for $[0,1]$-valued degrees without explicitly considering inter-propositional dependencies. Therefore, this paper does not perform fusion directly on the belief structure. Instead, a BPA is first transformed into a possibilistic structural representation with the same reconstructable information content. Fusion is then performed in this transformed space, and the fused representation is finally reconstructed into a valid BPA.

Following this rationale, we model the dependencies among propositions as a directed acyclic graph, termed the belief evolution network (BEN) \cite{zhou2022belief}. Each directed edge in the BEN corresponds to an inclusion relation between two adjacent cardinality layers and describes how belief mass is transmitted during refinement, as illustrated in Fig. \ref{sd_fig}. Based on this formulation, a reversible bridge between belief structures and possibilistic structures can be constructed.

\begin{figure}[htbp!]
	\centering
	\includegraphics[width=0.35\textwidth]{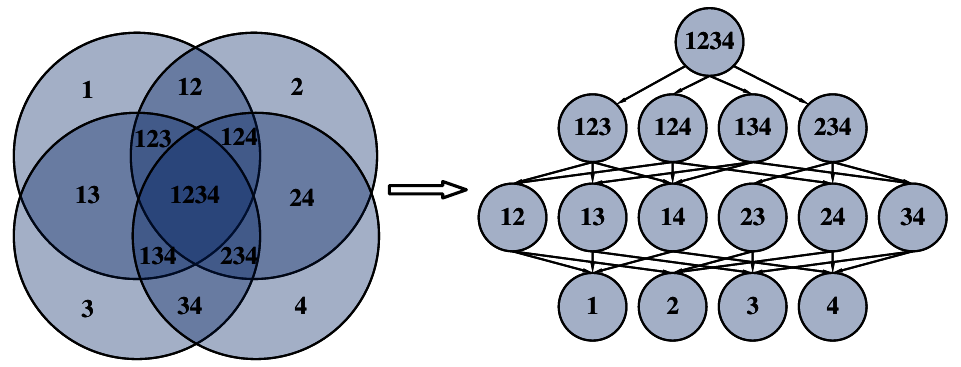}
	\caption{Mapping structural dependency among focal sets into BEN.}
	\label{sd_fig}
\end{figure}

\subsection{Belief evolution network-based isopignistic transformation}
\label{isosubsec}

In this paper, the empty set mass is allowed and denoted by $e_m=m(\emptyset)$, which can be interpreted as the conflict mass or the degree of non-normalizedness. The normalized BPA is defined as $\bar m(F)=m(F)/(1-e_m)$ for $F\neq\emptyset$. The normalized pignistic probability transformation (PPT) is given by
\begin{equation}
	\label{betpen}
	BetP^{\rm N}_m(\omega)
	=
	\sum_{\omega\in F\subseteq\Omega,F\neq\emptyset}
	\frac{\bar m(F)}{|F|}
	=
	\frac{BetP_m(\omega)}{1-m(\emptyset)} .
\end{equation}
Its unnormalized version is defined as
\begin{equation}
	\label{betpe}
	BetP_m(\omega)
	=
	\sum_{\omega\in F\subseteq\Omega,F\neq\emptyset}
	\frac{m(F)}{|F|}.
\end{equation}
Here, $BetP^{\rm N}_m$ is a probability distribution on $\Omega$. When the empty set is explicitly considered, we let $BetP_m(\emptyset)=m(\emptyset)$.

\begin{definition}[Isopignistic domain]
	Given a probability distribution $p$ on an $n$-element FoD $\Omega$ and a fixed empty-set mass $e$, the isopignistic domain associated with $(e,p)$ is defined as
	\[
	\mathfrak{Iso}_{e,p}
	=
	\{m\mid m(\emptyset)=e,\ BetP^{\rm N}_m=p\}.
	\]
\end{definition}
To move a BPA within an isopignistic domain while preserving the same pignistic probability, Zhou \textit{et al}. proposed a BEN-based mechanism, termed the isopignistic transformation \cite{zhou2024isopignistic}.

\begin{definition}[Isopignistic transformation]
	\label{isod}
	Let $m_1,m_2\in\mathfrak{Iso}_{e,p}$ be two mass functions in the same isopignistic domain. Define a trans-isopignistic function $\zeta:2^\Omega\setminus\{\emptyset,\{\omega\}\mid\omega\in\Omega\}\rightarrow\mathbb R$. For any non-empty $F\subseteq\Omega$, the transformation $m_2=T_{\zeta}(m_1)$ is given by
	\[
	m_2(F)=m_1(F)-\zeta(F)+\sum_{G\in{\rm Par}(F)}\frac{\zeta(G)}{|G|},
	\]
	where $\zeta(F)=0$ for $|F|\leq1$, and ${\rm Par}(F)=\{G\subseteq\Omega\mid F\subset G,\ |G|=|F|+1\}$. When $m_1$ and $m_2$ are determined, $\zeta$ can be recursively obtained by
	\[
	\zeta(F)=m_1(F)+\sum_{G\in{\rm Par}(F)}\frac{\zeta(G)}{|G|}-m_2(F).
	\]
\end{definition}

\begin{remark}
	The isopignistic transformation is reversible. For $m_2=T_{\zeta}(m_1)$, the inverse transformation is $m_1=T_{-\zeta}(m_2)$, as proven in \cite{zhou2024isopignistic}.
\end{remark}

\begin{example}
	\label{ex1}
	Consider two mass functions
	\[
	\begin{aligned}
		m_1&=\{0.02,0.10,0.10,0.25,0.06,0.27,0.02,0.18\},\\
		m_2&=\{0.02,0.145,0.02,0.02,0,0,0,0.795\}.
	\end{aligned}
	\]
	For $m_2=T_{\zeta}(m_1)$, the corresponding trans-isopignistic function is
	\[
	\begin{aligned}
		\zeta(\{1,2,3\})&=0.18-0.795=-0.615,\\
		\zeta(\{1,2\})&=0.25-0.615/3-0.02=0.025,\\
		\zeta(\{1,3\})&=0.27-0.615/3-0=0.065,\\
		\zeta(\{2,3\})&=0.02-0.615/3-0=-0.185.
	\end{aligned}
	\]
\end{example}

Definition \ref{isod} and Example \ref{ex1} show that the trans-isopignistic function describes mass redistribution along the BEN while preserving the pignistic probability. However, this construction requires a given target mass function $m_2$, and is therefore insufficient for fusion scenarios where only source BPAs are available. Therefore, a complete semantic representation with a $2^n$-dimensional distribution is required for modeling mass functions in the isopignistic domain.

\subsection{Isopignistic canonical decomposition}

The above analysis shows that $\zeta$ is defined only after both the source and target mass functions are specified. To obtain a structure-oriented representation from a single BPA, an isopignistic canonical decomposition is introduced under the BEN framework. Canonical decomposition represents a belief function by elementary components with explicit semantics \cite{han2019decombination}. Representative frameworks have been developed by Smets \cite{smets1995canonical} and Pichon \cite{pichon2018canonical}; however, they are mainly designed for evidence combination and conflict management, and do not explicitly connect belief, probabilistic, and possibilistic structures \cite{zhou2024isopignistic}.

\begin{definition}[Isopignistic canonical decomposition]
	\label{iso_def}
	Consider a BPA $m$ on an $n$-element FoD $\Omega$, and let its normalized pignistic probability be $p_m$. The corresponding possibility distribution satisfies $Poss_m(\omega_i)=\sum_{\omega_j\in\Omega}\min(p_m(\omega_i),p_m(\omega_j))$. The isopignistic canonical decomposition of $m$ is defined as
	\begin{equation}
		I_m(F)=
		\begin{cases}
			e_m, & F=\emptyset,\\
			Poss_m(\omega), & F=\{\omega\},\\
			\sum_{A\supseteq F}\bar m(A)/\binom{|A|}{|F|}, & |F|\geq2.
		\end{cases}
	\end{equation}
	where $Poss_m$ denotes the consonant possibility distribution induced by $p_m$. The resulting function $I_m$ is called the isopignistic function of $m$.
\end{definition}

The isopignistic function $I_m$ decomposes a BPA into three interpretable components: \textbf{normalizedness}, \textbf{propensity}, and \textbf{commitment}. The \textbf{normalizedness component} $I_m(\emptyset)=e_m$ characterizes the empty-set mass, which reflects conflict or non-normalizedness. The \textbf{propensity component} $I_m(\{\omega\})=Poss_m(\omega)$ describes singleton-level atomic propensity. The \textbf{commitment component} $I_m(F)$ for $|F|\geq2$ characterizes the higher-order commitment carried by composite subset $F$. It is not a belief mass directly assigned to $F$, but a refinement flow through $F$ in the BEN.

The higher-order part of $I_m$ is directly related to the trans-isopignistic function in Definition \ref{isod}. If the target mass function is chosen as the pignistic mass function $m_p$, where $m_p(\{\omega\})=p_m(\omega)$ and $m_p(F)=0$ for $|F|\geq2$, then the trans-isopignistic function transforming $\bar m$ into $m_p$ satisfies $\zeta_m(F)=I_m(F)$ for $|F|\geq2$.

This decomposition provides a canonical representation from a single BPA. However, the components of $I_m$ cannot be adjusted independently without constraints. An arbitrary modification of $I_m$ may violate the non-negativity of the reconstructed BPA, as shown below.

\begin{example}
	\label{iso_counter_example}
	Continuing from Example \ref{ex1}, the isopignistic function of $m_1$ is
	\[
	I_{m_1}=\{0.02,1,0.872,0.316,0.811,0.337,0.082,0.184\}.
	\]
	If $I_{m_1}(\{1,2\})=0.6$ while the other components are kept unchanged, the reconstructed BPA becomes
	\[
	m=\{0.02,-0.039,-0.039,0.528,0.060,0.270,0.020,0.180\}.
	\]
	It can be observed that $m(\{1\})<0$ and $m(\{2\})<0$, which violates the non-negativity requirement of a BPA. Therefore, a directly modified isopignistic function is not necessarily admissible.
\end{example}

\subsection{Belief function on possibilistic structure}

Although $I_m$ decomposes a BPA into interpretable components, arbitrary modifications of $I_m$ may violate BPA reconstructability. Therefore, to obtain a reconstructable representation on the possibilistic structure, we further transform $I_m$ into an isopignistic relative function $\widehat I_m$, whose layer-wise structural pattern is constrained by the BEN capacity. For ease of definition, the following notations are introduced.

\begin{itemize}[leftmargin=1pt]
	\item \textbf{$t$-th cardinality layer $\mathcal L^{(t)}$.}
	$\mathcal L^{(t)}=\{F\subseteq\Omega:|F|=t\}$ is the collection of focal sets with cardinality $t$. Each $\mathcal L^{(t)}$ is regarded as a layer-specific finite frame.
	
	\item \textbf{$t$-th layer BEN transmit amount $m^{(t)}_{BetP}$.}
	It denotes the information available at the $t$-th layer for further transmission. For $t=1$, $m^{(1)}_{BetP}(\{\omega\})=p_m(\omega)=BetP_m^{\rm N}(\omega)$. For $t\geq2$, $m^{(t)}_{BetP}(F)=I_m(F)$, $F\in\mathcal L^{(t)}$.
	
	\item \textbf{$t$-th order isopignistic function $I_m^{(t)}$.}
	It is the restriction of $I_m$ to $\mathcal L^{(t)}$, i.e., $I_m^{(t)}=\{I_m(F):F\in\mathcal L^{(t)}\}$.
	
	\item \textbf{Parent set $\mathrm{Par}(F)$.}
	$\mathrm{Par}(F)=\{G\subseteq\Omega:G=F\cup\{\omega\},\omega\in\Omega\setminus F\}$.
\end{itemize}

\begin{definition}[Isopignistic relative function]
	\label{iso_rel_def}
	Consider a mass function $m$ on an $n$-element FoD $\Omega$ and its isopignistic function $I_m$. The isopignistic relative function $\widehat I_m$ is constructed as follows.
	
	\textbf{Step 1:} Preserve the normalizedness and propensity components:
$
	\widehat I_m(\emptyset)=I_m(\emptyset),\quad
	\widehat I_m(\{\omega\})=I_m(\{\omega\}),\ \forall\omega\in\Omega
$.
	
	\textbf{Step 2:} For each $t=1,\ldots,n-1$, define the bottleneck focal set as
	\[
	F^\star=\arg\min_{F\in\mathcal L^{(t)}}
	\frac{m^{(t)}_{BetP}(F)}{\sum_{G\in{\rm Par}(F)}I_m(G)}.
	\]
	When the denominator is zero, the corresponding channel is regarded as inactive and excluded from the bottleneck search; if all parent sums are zero, the $(t+1)$-th layer is set to zero.
	
	\textbf{Step 3:} Define the scaling coefficient as
	\[
			s_m(t)=
		\frac{\sum_{G\in{\rm Par}(F^\star)}I_m(G)}
		{(t+1)m^{(t)}_{BetP}(F^\star)}.
\]
	Here, $0\leq s_m(t)\leq1$, which measures the admissible activation degree of the $(t+1)$-th commitment layer.
	
	\textbf{Step 4:} For each $G\in\mathcal L^{(t+1)}$, define
	\[
	\widehat I_m(G)=
	\frac{s_m(t)I_m(G)}{\max_{H\in\mathcal L^{(t+1)}}I_m(H)}.
	\]
	This operation preserves the relative commitment pattern within the same cardinality layer.
\end{definition}

\begin{example}[Isopignistic relative function of $m_1$]
	\label{iso_rel_example}
	Continuing from Examples \ref{ex1} and \ref{iso_counter_example}, the isopignistic function of $m_1$ is
	\[
	I_{m_1}=\{0.02,1,0.872,0.316,0.811,0.337,0.082,0.184\}.
	\]
	The normalizedness and propensity components are preserved. For $t=1$, the singleton-layer transmit amounts are
	\[
	m^{(1)}_{BetP}=\{0,0.429,0.301,0,0.270,0,0,0\}.
	\]
	Since
	\[
	\frac{m^{(1)}_{BetP}(\{3\})}{I_{m_1}(\{1,3\})+I_{m_1}(\{2,3\})}
	=
	\min_{F\in\mathcal L^{(1)}}
	\frac{m^{(1)}_{BetP}(F)}{\sum_{G\in{\rm Par}(F)}I_{m_1}(G)},
	\]
	the bottleneck focal set is $F^\star=\{3\}$ and
	\[
	s_{m_1}(1)=
	\frac{I_{m_1}(\{1,3\})+I_{m_1}(\{2,3\})}
	{2m^{(1)}_{BetP}(\{3\})}
	=0.776.
	\]
	Thus,
	\[
	\widehat I_{m_1}(\{1,2\})=0.728,
	\widehat I_{m_1}(\{1,3\})=0.776,
	\widehat I_{m_1}(\{2,3\})=0.189.
	\]
	For $t=2$, $F^\star=\{2,3\}$ and
	\[
	s_{m_1}(2)=
	\frac{I_{m_1}(\{1,2,3\})}{3I_{m_1}(\{2,3\})}
	=
	0.748.
	\]
	Finally,
	\[
	\widehat I_{m_1}
	=
	\{0.02,1,0.872,0.728,0.811,0.776,0.189,0.748\}.
	\]
\end{example}

Compared with $I_m$, the relative representation $\widehat I_m$ preserves the normalizedness and propensity components exactly. The commitment component is rescaled, but its relative magnitudes within each cardinality layer remain unchanged. Hence, $\widehat I_m$ only separates the structural commitment pattern from the admissible scale determined by the BEN capacity.

\begin{definition}[Reconstruction of mass function]
	\label{iso_rel_inv_def}
	Let $\widehat I:2^\Omega\rightarrow[0,1]$ be an isopignistic relative function. The reconstructed BPA is denoted by $m=\mathcal R(\widehat I)$.
	
	\textbf{Step 1:} Let $\pi(\omega)=\widehat I(\{\omega\})$ and sort the elements as $\pi(\omega_{(1)})\geq\cdots\geq\pi(\omega_{(n)})$. Recover the pignistic probability by
	\[
	p(\omega_{(n)})=\frac{\pi(\omega_{(n)})}{n},
	p(\omega_{(r)})=p(\omega_{(r+1)})
	+\frac{\pi(\omega_{(r)})-\pi(\omega_{(r+1)})}{r}.
	\]
	
	\textbf{Step 2:} Assign $I(\emptyset)=\widehat I(\emptyset)$ and $I(\{\omega\})=\widehat I(\{\omega\})$.
	
	\textbf{Step 3:} For each $t=1,\ldots,n-1$, define
	\[
	F^\star=\arg\min_{F\in\mathcal L^{(t)}}
	\frac{m^{(t)}_{BetP}(F)}
	{\sum_{G\in{\rm Par}(F)}\widehat I(G)},
	r(t)=
	\frac{(t+1)m^{(t)}_{BetP}(F^\star)}
	{\sum_{G\in{\rm Par}(F^\star)}\widehat I(G)}.
	\]
	As in Definition~\ref{iso_rel_def}, channels with zero parent sum are treated as inactive; if all parent sums are zero, the corresponding recovered commitment layer is set to zero.
	Then, for each $G\in\mathcal L^{(t+1)}$,
	\[
	I(G)=r(t)\widehat I(G)
	\max_{H\in\mathcal L^{(t+1)}}\widehat I(H).
	\]
	
	\textbf{Step 4:} Reconstruct the normalized BPA from $I$ by
	\[
	\bar m(\Omega)=I(\Omega),
	\bar m(F)=I(F)-\sum_{G\in{\rm Par}(F)}\frac{I(G)}{|G|}.
	\]
	
	\textbf{Step 5:} Let $e=\widehat I(\emptyset)$. The final BPA is
	\[
	m(\emptyset)=e,\qquad m(F)=(1-e)\bar m(F),\quad F\neq\emptyset.
	\]
\end{definition}

\begin{theorem}
	\label{thm:valid_reconstruction}
	Let $\widehat I:2^\Omega\rightarrow[0,1]$ satisfy $\widehat I(\emptyset)\in[0,1]$ and $\max_{\omega\in\Omega}\widehat I(\{\omega\})=1$. The reconstruction result $m=\mathcal R(\widehat I)$ is a valid BPA, i.e., $m(F)\geq0$ for all $F\subseteq\Omega$ and $\sum_{F\subseteq\Omega}m(F)=1$.
\end{theorem}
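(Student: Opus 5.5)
The plan is to verify the two BPA conditions separately, working with the normalized function $\bar m$ produced in Step~4 of Definition~\ref{iso_rel_inv_def}. Step~5 then multiplies every non-empty mass by $1-e\geq0$ and sets $m(\emptyset)=e\in[0,1]$. So it suffices to show that $\bar m(F)\geq0$ for every non-empty $F$ and that $\sum_{F\neq\emptyset}\bar m(F)=1$. Both facts will be derived layer by layer along the BEN. Throughout, I read the transmit amounts as in Definition~\ref{iso_rel_def}: $m^{(1)}_{BetP}=p$ from Step~1, and $m^{(t)}_{BetP}(F)=I(F)$ for $t\geq2$. Correspondingly, Step~4 of Definition~\ref{iso_rel_inv_def} is read as $\bar m(F)=m^{(|F|)}_{BetP}(F)-\sum_{G\in{\rm Par}(F)}I(G)/|G|$.

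\textbf{Step 1: the recovered $p$ is a probability distribution.} The sorted $\pi$ gives increments $\pi(\omega_{(r)})-\pi(\omega_{(r+1)})\geq0$, and $\pi(\omega_{(n)})\geq0$. Hence every $p(\omega_{(r)})\geq0$. Unfolding the recursion gives
\[
p(\omega_{(r)})=\sum_{k\geq r}\frac{\pi(\omega_{(k)})-\pi(\omega_{(k+1)})}{k},\qquad \pi(\omega_{(n+1)}):=0 .
\]
Summing over $r$, the term with index $k$ is counted exactly $k$ times, so $\sum_r p(\omega_{(r)})=\pi(\omega_{(1)})=\max_\omega\widehat I(\{\omega\})=1$ by hypothesis.

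\textbf{Step 2: normalization by telescoping.} Fix a set $G$ of size $t+1$. It is a parent of exactly $t+1$ sets of size $t$, and each of these subtracts $I(G)/(t+1)$ in the formula for $\bar m$. So the total amount subtracted on account of $G$ is $I(G)$, and this is exactly what $G$ contributes as its own transmit amount $m^{(t+1)}_{BetP}(G)=I(G)$. Summing $\bar m$ over all non-empty sets, every commitment term therefore cancels; this also covers $\bar m(\Omega)=I(\Omega)$. What remains is $\sum_\omega p(\omega)=1$ by Step~1.

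\textbf{Step 3: non-negativity, which is the main point.} I argue by induction on $t$ that all transmit amounts on layer $t$ are non-negative and that $\bar m(F)\geq0$ for $F\in\mathcal L^{(t)}$. The base case is $m^{(1)}_{BetP}=p\geq0$. For layer $t$, the layer-$(t+1)$ values are $I(G)=c_t\widehat I(G)$ with $c_t=r(t)\max_H\widehat I(H)$.
\begin{itemize}
\item Since $\widehat I$ takes values in $[0,1]$, we have $0\leq\max_H\widehat I(H)\leq1$.
\item By the induction hypothesis, $r(t)\geq0$.
\item Hence $0\leq c_t\leq r(t)$, and in particular $I(G)\geq0$, which gives the next induction hypothesis.
\end{itemize}
Now take any active $F\in\mathcal L^{(t)}$. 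The arg-min definition of $F^\star$ gives
\[
r(t)\leq\frac{(t+1)m^{(t)}_{BetP}(F)}{\sum_{G\in{\rm Par}(F)}\widehat I(G)}.
\]
Therefore
\[
\sum_{G\in{\rm Par}(F)}\frac{I(G)}{t+1}=\frac{c_t}{t+1}\sum_{G\in{\rm Par}(F)}\widehat I(G)\leq m^{(t)}_{BetP}(F),
\]
that is, $\bar m(F)\geq0$. For an inactive $F$ (zero parent sum), every parent has $\widehat I(G)=0$, so $\bar m(F)=m^{(t)}_{BetP}(F)\geq0$. If all parent sums on the layer vanish, the next layer is set to zero and the same conclusion holds. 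Finally, $\bar m(\Omega)=I(\Omega)\geq0$.

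The delicate point is exactly the bound $c_t\leq r(t)$. It shows that the bottleneck capacity computed in Step~3 of Definition~\ref{iso_rel_inv_def} is never exceeded, for any input $\widehat I$ and not only for one produced by Definition~\ref{iso_rel_def}. The telescoping in Step~2 and the choice of reading for the transmit amounts are routine checks by comparison.
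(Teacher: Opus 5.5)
Your proof is correct and follows the same route as the paper's: you recover a valid probability $p$ from the normalized singleton layer, use the bottleneck scaling to guarantee $\sum_{G\in{\rm Par}(F)}I(G)\le(t+1)m^{(t)}_{BetP}(F)$ so that $\bar m\ge0$, and then rescale by $1-e$. You also make explicit several points the paper only asserts: the bound $r(t)\max_H\widehat I(H)\le r(t)$ (which uses $\widehat I\le1$), the BEN telescoping that gives $\sum_{F\neq\emptyset}\bar m(F)=1$, and the reading of Step~4 with $p$ rather than $\widehat I(\{\omega\})$ as the singleton transmit amount, which is the reading the paper's numerical examples actually use.
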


\begin{proof}
	If $\widehat I(\emptyset)=1$, the reconstruction gives $m(\emptyset)=1$ and $m(F)=0$ for $F\neq\emptyset$, which is valid. Otherwise, $\widehat I(\emptyset)\in[0,1)$. Since $\max_{\omega\in\Omega}\widehat I(\{\omega\})=1$, the singleton layer is a normalized possibility distribution and can be converted into a valid probability distribution $p$. For each higher-order layer, the bottleneck-based reverse scaling ensures
	$
	\sum_{G\in{\rm Par}(F)}I(G)
	\leq
	(t+1)m^{(t)}_{BetP}(F),\quad \forall F\in\mathcal L^{(t)}$.
	Thus, the recovered commitment flow never exceeds the BEN capacity, and the reconstructed conditional BPA satisfies $\bar m(F)\geq0$ and $\sum_{F\neq\emptyset}\bar m(F)=1$. Finally, $m(\emptyset)=\widehat I(\emptyset)$ and $m(F)=(1-\widehat I(\emptyset))\bar m(F)$ yield a valid BPA.
\end{proof}

Although the isopignistic relative function $\widehat I_m$ maps each subset to a value in $[0,1]$, it is still defined on the original belief structure. The elements of $2^\Omega$ are not mutually orthogonal propositions. To make it compatible with possibilistic processing, each cardinality layer $\mathcal L^{(t)}$ is regarded as an independent finite frame, whose elements are not original propositions to be assigned belief masses, but structural channels with the same cardinality. On each such layer, the restriction of $\widehat I_m$ can be represented as a layer-wise possibilistic representation.

\begin{definition}[Layer-wise possibilistic representation]
	\label{def:layer_poss_rep}
	Let $\Omega$ be a finite FoD, a layer-wise possibilistic representation on $\mathcal L^{(t)}$ is a mapping $\pi^{(t)}:\mathcal L^{(t)}\rightarrow[0,1]$.
	For any $\mathcal A\subseteq\mathcal L^{(t)}$, the induced layer-wise possibility measure is defined as
	\[
	\Pi^{(t)}(\mathcal A)=\max_{\omega\in\mathcal A}\pi^{(t)}(\omega),
	\]
	with $\Pi^{(t)}(\emptyset)=1-h^{(t)}$, where $h^{(t)}=\Pi^{(t)}(\Omega)$.
	If $h^{(t)}=1$, the representation is normal; otherwise, it is subnormal.
\end{definition}

\begin{definition}[belief possibilistic representation]
	\label{def:belief_poss_rep}
	Let $m$ be a BPA on an $n$-element FoD $\Omega$, and let $\widehat I_m$ be its isopignistic relative function. The $t$-layer possibilistic representation of $m$ is defined as
	\[
	\pi_m^{(t)}(F)=\widehat I_m(F),\quad F\in\mathcal L^{(t)},\quad t=1,\ldots,n.
	\]
	which constitute the belief possibilistic representation
	\[
	\varpi_m
	=
	\left(
	e_m,\pi_m^{(1)},\pi_m^{(2)},\ldots,\pi_m^{(n)}
	\right),
	\]
	where $e_m=m(\emptyset)$.
\end{definition}

The above definition clarifies the relationship among the BPA $m$, the isopignistic relative function $\widehat I_m$, and belief possibilistic representation $\varpi_m$, which have identical information content
\[
m\leftrightarrow I_m\leftrightarrow \widehat I_m\leftrightarrow \varpi_m.
\]

\section{Evidential information fusion on possibilistic structure}\label{hcrms}

\subsection{Fusion framework}

\begin{figure*}[t]
	\centering
	\includegraphics[width=\textwidth]{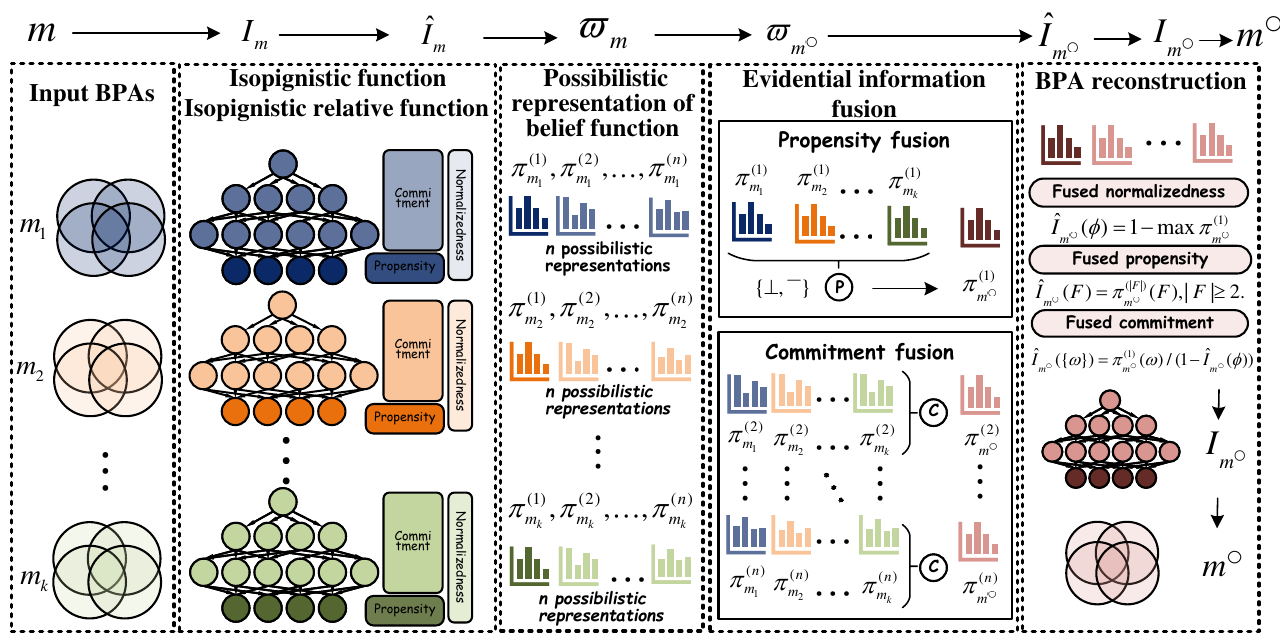}
	\caption{Overview of the proposed method.}
	\label{overview_fig}
\end{figure*}
Based on the proposed possibilistic representations, we further develop a flexible evidential information fusion framework. The overall framework is illustrated in Fig. \ref{overview_fig}. Suppose that $k$ BPAs $m_1,m_2,\ldots,m_k$ are defined on the same $n$-element FoD $\Omega$. For each source $m_i$, its representation process is written as
\[
m_i
\longrightarrow
I_{m_i}
\longrightarrow
\widehat I_{m_i}
\longmapsto
\varpi_{m_i}.
\]
The fusion module consists of two parts: propensity fusion and commitment fusion. The propensity fusion module acts on the singleton layer, which aggregates the effective singleton-level tendency of different sources. The commitment fusion module acts on the higher-layer possibilistic representations. These layers correspond to the commitment component of the isopignistic relative function and describe the relative structural profile generated by the isopignistic refinement process. 

After propensity fusion and commitment fusion, the raw fused singleton propensity profile is denoted by $\widetilde\pi^{(1)\circ}$. Its height determines the empty-set mass:
\[
h^\circ=\max_{\omega\in\Omega}\widetilde\pi^{(1)\circ}(\{\omega\}),\qquad
e_{m^\circ}=1-h^\circ.
\]
When $h^\circ>0$, the normalized singleton profile is
\[
\pi^{(1)\circ}(\{\omega\})
=
\frac{\widetilde\pi^{(1)\circ}(\{\omega\})}{h^\circ}.
\]
If $h^\circ=0$, then $e_{m^\circ}=1$ and the fused result is the empty BPA. The fused possibilistic representation is obtained as
\[
\varpi^\circ=
\left(
e_{m^{\circ}},
\pi^{(1)\circ},\pi^{(2)\circ},\ldots,\pi^{(n)\circ}
\right).
\]
It can be directly rewritten as the fused isopignistic relative function $\widehat I_{m^\circ}$. Specifically,
\[
\widehat I_{m^\circ}(\emptyset)=e_{m^{\circ}},
\qquad
\widehat I_{m^\circ}(\{\omega\})=\pi^{(1)\circ}(\{\omega\}),
\]
and for each higher layer,
\[
\widehat I_{m^\circ}(F)
=
\pi^{(t)\circ}(F),
\quad
F\in\mathcal L^{(t)},\quad t=2,\ldots,n.
\]
Finally, the fused isopignistic relative function is reconstructed into the fused BPA:
\[
\varpi^\circ
\longrightarrow
\widehat I_{m^\circ}
\longrightarrow
I_{m^\circ}
\longrightarrow
m^\circ =m_1\circledtiny{$\pi$}\cdots\circledtiny{$\pi$}m_k.
\]
Therefore, the proposed framework performs fusion in the layer-wise possibilistic representation space while guaranteeing that the final output is reconstructed as a valid belief function.

\subsection{Propensity fusion}

\begin{definition}[Propensity fusion]
	\label{def:propensity_fusion}
	Consider $k$ BPAs defined on the FoD $\Omega$. For source $m_i$, let $\pi_{m_i}^{(1)}$ be its layer-$1$ possibilistic representation. The fused propensity component is obtained pointwise by
	\[
	\widetilde\pi^{(1)\circ}(\{\omega\})
	=
	\circledlarge{\rm P}
	\left(
	\widetilde\pi_{m_1}^{(1)}(\{\omega\}),
	\widetilde\pi_{m_2}^{(1)}(\{\omega\}),
	\ldots,
	\widetilde\pi_{m_k}^{(1)}(\{\omega\})
	\right),
	\] where $\widetilde\pi_{m_i}^{(1)}(\{\omega\})
	=
	(1-e_{m_i})\pi_{m_i}^{(1)}(\{\omega\})$ and $\circledsmall{\rm P}$ is the propensity fusion operator, which can be selected as a t-norm or a t-conorm according to the source state. 
\end{definition}

The propensity fusion module aggregates the singleton-level tendency of different sources. The factor $(1-e_m)$ discounts the singleton possibilistic representation, so only the usable part of the evidence contributes to the fusion of atomic hypotheses. The operator $\circledsmall{\rm P}$ determines the fusion attitude: a t-norm emphasizes singleton hypotheses jointly supported by the sources, whereas a t-conorm preserves singleton hypotheses supported by at least one source.

\subsection{Commitment fusion}

\begin{definition}[Commitment fusion]
	\label{def:commitment_fusion}
	Consider $k$ BPAs defined on the FoD $\Omega$. For source $m_i$, let $\pi_{m_i}^{(t)}$ be its layer-$t$ possibilistic representation, where $t=2,\ldots,n$. The fused commitment component in layer $\mathcal L^{(t)}$ is obtained pointwise by
	\[
	\widetilde\pi^{(t)\circ}(F)
	=
	\circledlarge{\rm C}
	\left(
	\pi_{m_1}^{(t)}(F),
	\pi_{m_2}^{(t)}(F),
	\ldots,
	\pi_{m_k}^{(t)}(F)
	\right),
	\quad F\in\mathcal L^{(t)},
	\]
	where $\circledsmall{\rm C}$ is the commitment fusion operator, which can be selected as a t-norm or a t-conorm according to the source state.
\end{definition}

The commitment fusion module aggregates the higher-order structural commitment of different sources. For each layer $\mathcal L^{(t)}$, $t\geq2$, the value $\pi_{m_i}^{(t)}(F)$ represents the relative commitment strength carried by the structural channel $F$ in the isopignistic refinement process. Therefore, commitment fusion is not a direct fusion of belief masses, but a pointwise aggregation of the layer-wise relative commitment profiles. The operator $\circledsmall{\rm C}$ determines the fusion attitude: a t-norm emphasizes structural channels commonly supported by the sources, whereas a t-conorm preserves structural channels supported by at least one source.

\subsection{Possibilistic evidence combination rule}

\begin{definition}[Possibilistic evidence combination rule]
	\label{def:pecr}
	Consider $k$ BPAs defined on the FoD $\Omega$. For each source $m_i$, its possibilistic representation is  $\varpi_{m_i}=(e_{m_i},\pi_{m_i}^{(1)},\ldots,\pi_{m_i}^{(n)})$. Based on Definitions \ref{def:propensity_fusion} and \ref{def:commitment_fusion}, for layer $t$, the raw fused component is $\widetilde\pi^{(t)\circ}$. Let
	\[
	h^\circ=\max_{\omega\in\Omega}\widetilde\pi^{(1)\circ}(\{\omega\}),\qquad
	e_{m^\circ}=1-h^\circ.
	\]
	For $h^\circ>0$, define $\pi^{(1)\circ}=\widetilde\pi^{(1)\circ}/h^\circ$; if $h^\circ=0$, the fused BPA is the empty BPA. For $t\geq2$, let $\pi^{(t)\circ}=\widetilde\pi^{(t)\circ}$. The possibilistic evidence combination rule (PECR) is defined as
	\[
	\varpi^\circ=
	\left(e_{m^{\circ}},
	\pi^{(1)\circ},
	\pi^{(2)\circ},
	\ldots,
	\pi^{(n)\circ}
	\right),
	\]
	It is then written back as the fused isopignistic relative function:
	\[
	\widehat I_{m^\circ}(\emptyset)=e_{m^\circ},
	\qquad
	\widehat I_{m^\circ}(F)=\pi^{(t)\circ}(F),
	\quad F\in\mathcal L^{(t)}.
	\]
	Finally, the fused BPA is reconstructed by
	\[
	m_1\circledtiny{$\pi$}\cdots\circledtiny{$\pi$}m_k=m^\circ=\mathcal R(\widehat I_{m^\circ}),
	\]
	where $\mathcal R$ is the reconstruction operator defined in Definition~\ref{iso_rel_inv_def}.
\end{definition}

\begin{example}
	\label{ex:pecr}
	Consider two BPAs
	\[
	\begin{aligned}
		m_1&=\{0,0.1,0.12,0.25,0.06,0.27,0.02,0.18\},\\
		m_2&=\{0,0.02,0.16,0.14,0.11,0.31,0.25,0.01\}.
	\end{aligned}
	\]When the propensity operator is product t-norm \footnote{$\circledtiny{\rm P}(x,y)=xy$} and the commitment operator is selected as the maximum t-conorm\footnote{$\circledtiny{\rm C}(x,y)=\max(x,y)$}, their PECR is shown as follows.
	Based on Definition \ref{iso_def}, their isopignistic functions are
	\[\begin{aligned}
		&I_{m_1}
		=
		\{0,1,0.8950,0.3100,0.7950,0.3300,0.0800,0.1800\},\\
		&I_{m_2}
		=
		\{0,0.7450,0.9650,0.1433,1,0.3133,0.2533,0.0100\}.
	\end{aligned}
	\]
	Based on Definition \ref{iso_rel_def}, the corresponding isopignistic relative functions are
	\[\begin{aligned}
		&\widehat I_{m_1}
		=
		\{0,1,0.8950,0.7267,0.7950,0.7736,0.1875,0.7500\},\\
		&\widehat I_{m_2}
		=
		\{0,0.7450,0.9650,0.4206,1,0.9195,0.7434,0.0233\}.
	\end{aligned}
	\]
	Based on Definition \ref{def:belief_poss_rep}, their possibilistic representations are
\vspace{-1.2em}
\[
\begin{minipage}{0.48\linewidth}
	\begin{align*}
		\pi_{m_1}^{(1)}&=(1.000,0.895,0.795),\\
		\pi_{m_1}^{(2)}&=(0.727,0.774,0.188),\\
		\pi_{m_1}^{(3)}&=(0.750), e_m=0.
	\end{align*}
\end{minipage}
\hfill
\begin{minipage}{0.48\linewidth}
	\begin{align*}
		\pi_{m_2}^{(1)}&=(0.745,0.965,1.000),\\
		\pi_{m_2}^{(2)}&=(0.421,0.920,0.743),\\
		\pi_{m_2}^{(3)}&=(0.023), e_m=0.
	\end{align*}
\end{minipage}
\]
Based on the Definitions \ref{def:propensity_fusion} and \ref{def:commitment_fusion}, the fused  components are
	\[\begin{aligned}
			&\widetilde\pi^{(1)\circ}=(0.7450,0.8637,0.7950),\\
		&\widetilde\pi^{(2)\circ}=(0.7267,0.9195,0.7434),\widetilde\pi^{(3)\circ}=(0.7500).
	\end{aligned}
	\]
Based on the Definition \ref{def:pecr}, the fused isopignistic relative function gives
\[
\widehat I_{m^\circ}
=
\{0.136,0.863,1.000,0.727,0.921,0.920,0.743,0.750\}.
\]
Based on the Definition \ref{iso_rel_inv_def}, the fused BPA is
\[
m^\circ
=
\{0.136,0.020,0.138,0.050,0.043,0.104,0.055,0.454\}.
\]
\end{example}
For a more comprehensive comparison, the CCR and DCR are also applied to the same two BPAs in Example \ref{ex:pecr}.
\[
\begin{aligned}
	&m_{\circledtiny{$\cap$}}
	=
	\{0.195,0.177,0.205,0.063,0.166,0.142,0.050,0.002\},\\
	&m_{\circledtiny{$\cup$}}
	=
	\{0,0.002,0.019,0.129,0.007,0.181,0.078,0.584\}.
\end{aligned}
\]
The three methods are compared in Table \ref{tab:method_compare}, where the ignorance degree $\mathrm{Ign}(m)=\sum_{\emptyset\neq F\subseteq\Omega}m(F)|F|$,
and the Shannon entropy is calculated from the normalized pignistic probability $BetP^{\rm N}$.
\begin{table}[htbp]
	\centering
	\caption{Comparison of different fusion methods}
	\label{tab:method_compare}
	\renewcommand{\arraystretch}{1.15}
	\setlength{\tabcolsep}{3pt}
	\small
	\begin{tabular}{c|c|c|c|c}
		\hline
		Method & $m(\emptyset)$ & $\mathrm{Ign}(m)$ & $BetP^{\rm N}$ & $H(BetP^{\rm N})$ \\
		\hline
		Proposed & $0.1363$ & $1.9802$ & $(0.317,0.396,0.288)$ & $1.5716$ \\
		CCR & $0.1951$ & $1.0636$ & $(0.327,0.325,0.348)$ & $1.5842$ \\
		DCR & $0$ & $2.5564$ & $(0.331,0.318,0.352)$ & $1.5837$ \\
		\hline
	\end{tabular}
\end{table}
As shown in Table \ref{tab:method_compare}, the three methods reflect different fusion semantics. CCR produces the largest empty set mass and the smallest ignorance degree, indicating that conjunctive intersection tends to convert disagreement into conflict and concentrate mass on smaller focal sets. In contrast, DCR produces no empty set conflict but gives the largest ignorance degree, since disjunctive union preserves more mass on composite focal sets. The proposed rule shows an intermediate but more flexible behavior. Its product t-norm in the propensity component extracts the common singleton-level tendency, yielding the lowest Shannon entropy of $BetP^{\rm N}$. Meanwhile, its maximum t-conorm in the commitment component preserves structural channels supported by either source, resulting in an ignorance degree between CCR and DCR. Therefore, the proposed rule has more flexible semantics in uncertain information processing, which combines conjunctive reasoning at the propensity level with disjunctive preservation at the commitment level, rather than behaving as a purely conjunctive or disjunctive rule.
\section{Properties and advantages}
\label{prop}
 In this section, we first analyze whether the proposed method satisfies the general fusion properties and then demonstrate its typical advantages.

\subsection{Justification via basic principles}
Based on the criteria outlined in \cite{dubois2016basic} and \cite{abellan2021combination}, we validate the proposed method and assess its ability to combine bodies of evidence in a reasonable manner.

\begin{theorem}[Commutativity]
	The PECR satisfies commutativity, i.e., $m_1\circledsmall{$\pi$}m_2=m_2\circledsmall{$\pi$}m_1$.
\end{theorem}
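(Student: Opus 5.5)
The plan is to trace the PECR pipeline of Definition~\ref{def:pecr} stage by stage and observe that the two inputs enter only at the fusion step, and there only through symmetric operators. Fix $m_1,m_2$ on the same $n$-element FoD $\Omega$.

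\emph{Step 1: representations are computed per source.} The map $m_i\mapsto I_{m_i}\mapsto\widehat I_{m_i}\mapsto\varpi_{m_i}$ (Definitions~\ref{iso_def}, \ref{iso_rel_def}, \ref{def:belief_poss_rep}) depends only on $m_i$. So swapping the order of the sources only swaps the pair $(\varpi_{m_1},\varpi_{m_2})$ and does not change either representation.

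\emph{Step 2: the fusion step is symmetric.} In Definition~\ref{def:propensity_fusion}, the raw singleton component is $\widetilde\pi^{(1)\circ}(\{\omega\})=\circledsmall{\rm P}(\widetilde\pi^{(1)}_{m_1}(\{\omega\}),\widetilde\pi^{(1)}_{m_2}(\{\omega\}))$. The discount factor $(1-e_{m_i})$ is attached to each source separately. In Definition~\ref{def:commitment_fusion}, each higher layer is fused as $\widetilde\pi^{(t)\circ}(F)=\circledsmall{\rm C}(\pi^{(t)}_{m_1}(F),\pi^{(t)}_{m_2}(F))$. By assumption, $\circledsmall{\rm P}$ and $\circledsmall{\rm C}$ are t-norms or t-conorms, and commutativity is one of their defining axioms. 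Hence every raw fused component $\widetilde\pi^{(t)\circ}$, $t=1,\dots,n$, is unchanged when $m_1$ and $m_2$ are interchanged. For $k$ sources, associativity together with commutativity gives invariance under any permutation.

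\emph{Step 3: everything after fusion depends only on the fused components.} The height $h^\circ$, the fused empty-set mass $e_{m^\circ}=1-h^\circ$, the normalized profile $\pi^{(1)\circ}$, and the degenerate case $h^\circ=0$ are all functions of $\widetilde\pi^{(1)\circ}$ alone. The layers $\pi^{(t)\circ}=\widetilde\pi^{(t)\circ}$ for $t\ge2$ are unchanged. So $\widehat I_{m^\circ}$ is the same function in both orders. The reconstruction $\mathcal R$ of Definition~\ref{iso_rel_inv_def} is a deterministic map of $\widehat I_{m^\circ}$ alone, so it returns the same BPA: $m_1\circledsmall{$\pi$}m_2=m_2\circledsmall{$\pi$}m_1$.

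\emph{The only delicate point.} The bottleneck $\arg\min$ and the sort in Step~1 of $\mathcal R$ may have ties, and the procedure should still be well defined. With respect to commutativity this causes no trouble, because both orders feed the identical $\widehat I_{m^\circ}$ into the same procedure. I would still note that any tie-breaking yields the same scale: $r(t)$ depends only on the minimal ratio value, not on which minimizer is chosen. Likewise, the recovered $p$ does not depend on how equal values of $\pi$ are ordered. So $\mathcal R$ is a genuine function, and the argument closes.
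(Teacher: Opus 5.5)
Your proposal is correct and takes the same route as the paper: each source's representation is computed separately, the components are fused pointwise by commutative t-norms or t-conorms, and the deterministic reconstruction $\mathcal R$ is then applied to the same fused $\widehat I_{m^\circ}$ in either order. Your extra check is also correct: ties in the bottleneck $\arg\min$ or in the sorting step do not change $r(t)$ or the recovered $p$, which justifies the paper's unstated claim that the reconstruction is well defined.
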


\begin{proof}
	The propensity component is fused pointwise by $\circledsmall{\rm P}$, and each commitment component is fused pointwise by $\circledsmall{\rm C}$. Since t-norms and t-conorms are commutative, when BPAs are fused in different orders, the fused possibilistic representation is unchanged. Since the reconstruction from the fused isopignistic relative function to the BPA is deterministic, the final BPA is also unchanged.
\end{proof}

\begin{theorem}[Neutrality]
	The PECR has an operator-dependent neutral element. Specifically,
	\[
	\begin{array}{c|c|c}
		\hline
		\circledsmall{\rm P} & \circledsmall{\rm C} & \text{neutral BPA} \\
		\hline
		\text{t-norm} & \text{t-norm} & m(\Omega)=1 \\
		\text{t-norm} & \text{t-conorm} & m(\{\omega\})=1/|\Omega|,\ \forall \omega\in\Omega \\
		\text{t-conorm} & \text{t-conorm} & m(\emptyset)=1 \\
		\text{t-conorm} & \text{t-norm} & \text{/}\\
		\hline
	\end{array}
	\]
\end{theorem}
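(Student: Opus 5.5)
The plan is to treat each row of the table separately. For a candidate neutral BPA $m_0$ and an arbitrary BPA $m$, I will compute the belief possibilistic representation $\varpi_{m_0}$ of Definition~\ref{def:belief_poss_rep}. I then check that, layer by layer, it equals the identity element of the chosen operator: $1$ for a t-norm, since $T(x,1)=x$, and $0$ for a t-conorm, since $S(x,0)=x$. Given that, the fused raw components coincide with those of $m$. The normalization step of Definition~\ref{def:pecr} returns $(e_m,\pi^{(1)}_m,\ldots,\pi^{(n)}_m)$, because $\max_\omega \pi^{(1)}_m(\{\omega\})=1$ gives $h^\circ=1-e_m$. Finally I invoke the information equivalence $m\leftrightarrow I_m\leftrightarrow\widehat I_m\leftrightarrow\varpi_m$ stated after Definition~\ref{def:belief_poss_rep}, together with Theorem~\ref{thm:valid_reconstruction}, to conclude that $\mathcal R(\widehat I_m)=m$, so $m\,\circledsmall{$\pi$}\,m_0=m$. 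Commutativity (previous theorem) handles the other order.

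\textbf{Row 1 (vacuous BPA $m(\Omega)=1$).} Here $e=0$, $p$ is uniform, and $Poss(\omega)=n\cdot\frac1n=1$, so every discounted singleton value is $1$. For $|F|=t\ge2$, Definition~\ref{iso_def} gives $I(F)=1/\binom{n}{t}$, which is constant on each layer. For $t\ge2$ the bottleneck ratio then yields
\[
s(t)=\frac{(n-t)/\binom{n}{t+1}}{(t+1)/\binom{n}{t}}=\frac{(n-t)\binom{n}{t}}{(t+1)\binom{n}{t+1}}=1,
\]
and for $t=1$ one checks $s(1)=\frac{2/n}{2\cdot(1/n)}=1$. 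Since the layer is constant, Step~4 of Definition~\ref{iso_rel_def} gives $\widehat I(F)=1$ on every layer. So $\varpi_{m_0}$ is identically $1$, which is neutral for any t-norm in both modules.

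\textbf{Row 2 (uniform Bayesian BPA $m(\{\omega\})=1/n$).} The singleton part is again $e=0$ with all $\widehat I(\{\omega\})=1$, which is neutral for the t-norm $\circledsmall{\rm P}$. All higher $I(F)$ vanish, so every parent sum is zero and, by the convention in Step~2 of Definition~\ref{iso_rel_def}, every commitment layer is $0$. This is neutral for the t-conorm $\circledsmall{\rm C}$.

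\textbf{Row 3 (empty BPA $m(\emptyset)=1$).} The discounted propensity $(1-e)\pi$ is $0$, which is neutral for the t-conorm. For the higher layers I adopt the convention that the empty BPA has all commitment layers equal to $0$, since $\bar m$ is degenerate. This convention has to be stated explicitly and is the delicate point of this row. With it, $S(x,0)=x$ finishes the argument.

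\textbf{Row 4 (t-conorm $\circledsmall{\rm P}$, t-norm $\circledsmall{\rm C}$).} I argue by contradiction that no neutral BPA exists. Suppose $m_0$ were neutral. Propensity neutrality under a t-conorm forces every discounted singleton value to be $0$, so $e_{m_0}=1$. Commitment neutrality under a t-norm forces every commitment layer to be identically $1$. However, the empty BPA gives commitment $0$, so testing against any $m$ with a nonzero commitment layer gives $T(x,0)=0\neq x$.

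The main obstacle is the round-trip identity $\mathcal R(\widehat I_m)=m$. Its key point is that the bottleneck scaling $r(t)$ in Definition~\ref{iso_rel_inv_def} exactly inverts $s_m(t)$ and the max-normalization of Definition~\ref{iso_rel_def}. I would cite the equivalence chain claimed in the paper for this rather than re-derive it.
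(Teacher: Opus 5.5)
Your proposal is correct and follows the paper's route. Row by row, you check that the candidate's representation $\varpi_{m_0}$ equals the identity of the chosen operator in each module. For the last row, you show that propensity neutrality under a t-conorm forces $m_0$ to be the empty BPA, and its zero commitment layers cannot be neutral for a t-norm.

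Your write-up is more explicit than the paper's proof. It computes $s(t)=1$ for the vacuous BPA and checks the normalization $h^\circ=1-e_m$. It also notes that the empty-BPA commitment convention is what decides rows 3 and 4. Like the paper, it takes the bijection $m\leftrightarrow\widehat I_m$ as given.
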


\begin{proof}
	A neutral BPA must provide the neutral element of the corresponding operator in both the propensity and commitment fusion modules. For the propensity fusion, if operator is a t-norm, the neutral profile must be $1$ for all singletons; if operator is a t-conorm, the neutral profile must be $0$ for all singletons. For the commitment fusion, if operator is a t-norm, the neutral commitment profile must satisfy $\pi_m^{(t)}(F)=1$; if $\circledsmall{\rm C}$ is a t-conorm, it must satisfy $\pi_m^{(t)}(F)=0$.
	
	When both $\circledsmall{\rm P}$ and $\circledsmall{\rm C}$ are t-norms, the vacuous BPA $m(\Omega)=1$ induces an all-one singleton propensity and all-one commitment profiles, and is therefore neutral. When $\circledsmall{\rm P}$ is a t-norm and $\circledsmall{\rm C}$ is a t-conorm, the uniform Bayesian BPA $m(\{\omega\})=1/|\Omega|$ has an all-one singleton propensity and zero higher-order commitment, and is therefore neutral. When both operators are t-conorms, the empty BPA $m(\emptyset)=1$ gives zero effective singleton propensity and zero higher-order commitment, and is therefore neutral. When $\circledsmall{\rm P}$ is a t-conorm and $\circledsmall{\rm C}$ is a t-norm, neutrality would require zero effective singleton propensity but all-one higher-order commitment profiles. The former condition forces the source to be empty, while the latter cannot be induced by the empty BPA. Hence, no universal neutral BPA exists in this case.
\end{proof}

\begin{theorem}[Idempotency]\label{theo:ide}
	If both $\circledsmall{\rm P}$ and $\circledsmall{\rm C}$ are chosen as idempotent operators (min/max), then the PECR satisfies idempotency, i.e., $m\circledsmall{$\pi$}m=m$.
\end{theorem}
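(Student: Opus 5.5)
The plan is to follow $m$ through the PECR pipeline of Definition~\ref{def:pecr} when both inputs coincide, and to show that each stage returns exactly the representation of $m$. Afterwards, the reconstruction operator $\mathcal R$ is applied to $\widehat I_m$ itself and must be shown to invert the map $m\mapsto\widehat I_m$. The argument therefore splits into a fusion stage and a reconstruction stage.

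\textbf{Fusion stage.} Fix $m$ with $e_m<1$; the case $e_m=1$ is trivial, since $h^\circ=0$ returns the empty BPA. Every t-norm satisfies $T(x,x)\le x$ and every t-conorm satisfies $S(x,x)\ge x$. Idempotency forces equality, so $\circledsmall{\rm P}(x,x)=x$ and $\circledsmall{\rm C}(x,x)=x$ for either choice of min or max. Pointwise, $\widetilde\pi^{(1)\circ}(\{\omega\})=(1-e_m)\pi^{(1)}_m(\{\omega\})$, and $\widetilde\pi^{(t)\circ}=\pi^{(t)}_m$ for $t\ge2$. Definition~\ref{iso_def} gives $\max_\omega Poss_m(\omega)=\sum_j p_m(\omega_j)=1$, attained at the most probable singleton. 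Hence $h^\circ=1-e_m$, so $e_{m^\circ}=e_m$ and $\pi^{(1)\circ}=\pi^{(1)}_m$. Consequently $\widehat I_{m^\circ}=\widehat I_m$, and $m^\circ=\mathcal R(\widehat I_m)$.

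\textbf{Reconstruction stage.} It remains to show $\mathcal R(\widehat I_m)=m$, which I will establish layer by layer.

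First, Step~1 of Definition~\ref{iso_rel_inv_def} inverts $p\mapsto Poss$. With $p$ sorted decreasingly, $Poss(\omega_{(r)})=r\,p(\omega_{(r)})+\sum_{s>r}p(\omega_{(s)})$. Taking differences of consecutive terms yields exactly the stated recursion, and the base case is $p(\omega_{(n)})=\pi(\omega_{(n)})/n$. So the singleton transmit amounts $m^{(1)}_{BetP}=p_m$ are recovered.

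Next, induct on $t$. Assume $I^{(t)}=I_m^{(t)}$, so that $m^{(t)}_{BetP}$ agrees in the forward and reverse procedures. Write $M_{t+1}=\max_{H\in\mathcal L^{(t+1)}}I_m(H)$. Layer $t+1$ of $\widehat I_m$ is $I_m^{(t+1)}$ scaled by the positive constant $s_m(t)/M_{t+1}$. The bottleneck ratio $m^{(t)}_{BetP}(F)/\sum_{G\in{\rm Par}(F)}\widehat I(G)$ is therefore the forward ratio times a common constant, so the argmin $F^\star$ is the same set and inactive channels coincide. Substituting gives $r(t)=M_{t+1}\big/\bigl(s_m(t)^2\,\cdots\bigr)$ up to bookkeeping. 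The check to perform is that
\[
r(t)\,\widehat I(G)\max_H\widehat I(H)=I_m(G).
\]
Here $\max_H\widehat I(H)=s_m(t)$, and $r(t)=(t+1)m^{(t)}_{BetP}(F^\star)\,M_{t+1}\big/\bigl(s_m(t)\sum_{G\in{\rm Par}(F^\star)}I_m(G)\bigr)=M_{t+1}/s_m(t)^2$. The product is then $I_m(G)$, as required. The degenerate convention (all parent sums zero) is handled identically on both sides.

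Finally, with $I=I_m$ on all layers, Step~4 reproduces $\bar m$. For $|F|\ge2$, the identity $I_m(F)-\sum_{G\in{\rm Par}(F)}I_m(G)/|G|=\bar m(F)$ follows from Definition~\ref{iso_def} by a binomial counting argument: each $A\supseteq F$ with $|A|=|F|+k$ is reached through $k$ parents, and $\binom{|A|}{|F|+1}(|F|+1)/|A| \cdot\,$... reduces to the needed coefficient. For singletons, the same step uses $m^{(1)}_{BetP}=p_m$ in place of $Poss$. Step~5 then restores $e_m$.

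\textbf{Main obstacle.} I expect the main difficulty to lie in this last part. The paper's Step~4 writes $I(F)$ for singletons, where the intended quantity is $p(\omega)$, so the inversion must be read through the transmit amounts. I also need to confirm that the forward scale and reverse scale exactly cancel, since the reverse multiplies by $\max\widehat I$ while the forward divides by $\max I$. I would first verify both points numerically on Example~\ref{iso_rel_example}, and then write the general cancellation argument.
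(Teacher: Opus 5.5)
Your proposal is correct and follows the same route as the paper: idempotent operators leave $\varpi_m$ unchanged, and reconstruction then returns $m$. The paper's proof only asserts the one-to-one correspondence between $m$ and $\widehat I_m$, whereas you check $h^\circ=1-e_m$ and verify that $\mathcal R(\widehat I_m)=m$ layer by layer (including the necessary reading of Step 4 with $p(\omega)$ rather than $\widehat I(\{\omega\})$ at singletons), so your argument supplies the detail the paper omits.
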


\begin{proof}
	If propensity operator is idempotent, then $\circledsmall{\rm P}(x,x)=x$.
	Thus, fusing the singleton profile with itself leaves the raw fused propensity unchanged. If commitment operator is idempotent, then $\circledsmall{\rm C}(y,y)=y$.
	Thus, every fused commitment layer remains identical to the original one. Since the mapping between $m$ and $\widehat I_m$ is one-to-one, the reconstructed BPA is still $m$.
\end{proof}

\begin{theorem}[Associativity]
	When the propensity and commitment operators are fixed, the PECR satisfies associativity, i.e., $(m_1\circledsmall{$\pi$}m_2)\circledsmall{$\pi$}m_3
	=
	m_1\circledsmall{$\pi$}(m_2\circledsmall{$\pi$}m_3)$.
\end{theorem}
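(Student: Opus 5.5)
The plan is to show that both bracketings produce the same fused possibilistic representation before the final reconstruction. Since $\mathcal R$ is deterministic, equal representations give equal BPAs. The argument rests on a \emph{round-trip lemma}: the intermediate result $m_{12}=m_1\circledsmall{$\pi$}m_2$ is re-decomposed when it is fused with $m_3$, and I will show that this re-decomposition returns exactly the raw fused profiles that produced $m_{12}$. With that lemma, associativity reduces to associativity of the chosen t-norm or t-conorm, applied pointwise on each layer.

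\textbf{Step 1 (propensity layer).} Write $\widetilde\pi^{(1)}_{12}=\circledsmall{\rm P}(\widetilde\pi^{(1)}_{m_1},\widetilde\pi^{(1)}_{m_2})$ and $h_{12}=\max_\omega\widetilde\pi^{(1)}_{12}(\{\omega\})$. By Definition~\ref{def:pecr}, $e_{m_{12}}=1-h_{12}$ and $\pi^{(1)}_{m_{12}}=\widetilde\pi^{(1)}_{12}/h_{12}$. Provided the round-trip lemma gives $\pi^{(1)}_{m_{12}}$ back as the singleton layer of $\widehat I_{m_{12}}$, the discounted profile entering the next fusion is
\[
\widetilde\pi^{(1)}_{m_{12}}=(1-e_{m_{12}})\,\pi^{(1)}_{m_{12}}=\widetilde\pi^{(1)}_{12}.
\]
So the discount factor $(1-e)$ exactly undoes the renormalization by $h^\circ$. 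Both bracketings then yield $\circledsmall{\rm P}(\widetilde\pi^{(1)}_{m_1},\widetilde\pi^{(1)}_{m_2},\widetilde\pi^{(1)}_{m_3})$ by associativity of $\circledsmall{\rm P}$. Their heights coincide, hence so do the final $e_{m^\circ}$ and $\pi^{(1)\circ}$.

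\textbf{Step 2 (commitment layers).} For $t\ge2$ the fused layer is kept unnormalized, $\pi^{(t)\circ}=\widetilde\pi^{(t)\circ}$. Once $\pi^{(t)}_{m_{12}}=\circledsmall{\rm C}(\pi^{(t)}_{m_1},\pi^{(t)}_{m_2})$ is known, associativity of $\circledsmall{\rm C}$ gives equality of the two bracketings layer by layer.

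\textbf{Step 3 (the round-trip lemma; the main obstacle).} The lemma to prove is that for an admissible $\widehat I$ (as in Theorem~\ref{thm:valid_reconstruction}), $\widehat I_{\mathcal R(\widehat I)}=\widehat I$.
\begin{itemize}
\item \emph{Lower layers.} The empty-set entry and the singleton layer are immediate. Step~1 of Definition~\ref{iso_rel_inv_def} inverts the map $p\mapsto Poss$ of Definition~\ref{iso_def}, and the transmit amounts $m^{(t)}_{BetP}$ are recovered.
\item \emph{Higher layers.} Compare Step~3 of Definition~\ref{iso_rel_inv_def} with Steps~2--4 of Definition~\ref{iso_rel_def}. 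Because $I(G)$ is a positive multiple of $\widehat I(G)$ within each layer, the ratios $m^{(t)}_{BetP}(F)/\sum_{\mathrm{Par}(F)}(\cdot)$ differ only by a layer-wide constant. The bottleneck $F^\star$ is therefore the same set in both directions, and it remains to check that the scalings cancel: $s(t)\cdot r(t)$ against the two $\max$ factors.
\item \emph{Expected difficulty.} This cancellation is where I expect difficulty. A fused layer need not have the form produced by Definition~\ref{iso_rel_def}, whose layer-maximum equals $s_m(t)$. I would first try to verify, by a direct computation, that applying $\mathcal R$ and then Definition~\ref{iso_rel_def} reproduces $\widehat I$ for \emph{every} $[0,1]$-valued layer profile. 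If that fails, I would restrict the claim to the image class on which $\mathcal R$ is injective: profiles whose layer maxima are consistent with the bottleneck. I would then argue that $\circledsmall{\rm C}$ applied to such profiles stays in this class, or else define the intermediate representation canonically as the raw fused tuple.
\end{itemize}

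\textbf{Degenerate case.} Finally, I would treat $h^\circ=0$ separately. Here the intermediate BPA is empty, with effective singleton profile $0$. Because $0$ is absorbing for a t-norm, any further t-norm propensity fusion stays empty, so both bracketings give the empty BPA. For a t-conorm, $0$ is neutral, so the discounted profile is again the raw fused profile. For the higher layers of the empty BPA, the convention of Definition~\ref{iso_rel_def} (layers set to zero) must be checked for consistency with the raw commitment fusion.
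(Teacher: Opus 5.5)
\textbf{The gap is the round-trip lemma in Step 3, which you leave unproven and which is false in the generality you need.}

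Your Steps 1 and 2 are correct. The factor $1-e$ does undo the division by $h^\circ$, and the singleton layer always round-trips, because Step 1 of Definition~\ref{iso_rel_inv_def} inverts $p\mapsto Poss$ exactly. You are also right that everything hinges on $\widehat I_{\mathcal R(\widehat I)}=\widehat I$. The paper's proof uses exactly this, asserting a one-to-one correspondence between $\widehat I_m$ and $m$ without checking that fused profiles lie in the image of $m\mapsto\widehat I_m$.

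However, you misplace the difficulty. The scalings cancel for \emph{every} $[0,1]$-valued layer. Write $\hat M=\max_{H\in\mathcal L^{(t+1)}}\widehat I(H)$ and $\hat\rho=\max_F\sum_{G\in{\rm Par}(F)}\widehat I(G)/\bigl((t+1)m^{(t)}_{BetP}(F)\bigr)$. Reconstruction gives $I=(\hat M/\hat\rho)\,\widehat I$ on $\mathcal L^{(t+1)}$. Re-decomposition then gives $s(t)=\hat M$ and $\max I=\hat M^2/\hat\rho$, so $\widehat I$ is returned exactly. No restriction to profiles ``consistent with the bottleneck'' is needed.

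What actually fails is your premise that $I(G)$ is a \emph{positive} multiple of $\widehat I(G)$. Suppose an active channel $F$ (positive parent sum) has zero recovered transmit amount. Then $r(t)=0$, layer $t+1$ and every layer above it are reconstructed as zero, and re-decomposition returns zeros instead of the fused profile. Genuine representations never have this property, because their support is down-closed along the BEN. But a t-norm propensity combined with a t-conorm commitment produces it as soon as one source has $p(\omega)=0$ and another commits to a pair containing $\omega$. Your degenerate-case paragraph treats only $h^\circ=0$, which is harmless; this case is not.

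This is not just a hole in the argument: associativity fails here. Take product propensity and max commitment (the pair of Example~\ref{ex:pecr}), with
$m_1(\{\omega_1\})=m_1(\{\omega_2\})=0.5$, $m_2(\{\omega_2\})=m_2(\{\omega_1,\omega_3\})=0.5$, and $m_3(\{\omega_3\})=m_3(\{\omega_1,\omega_2\})=0.5$.
Their singleton layers are $(1,1,0)$, $(0.75,1,0.75)$ and $(0.75,0.75,1)$. Their layer-2 profiles on $(\{\omega_1,\omega_2\},\{\omega_1,\omega_3\},\{\omega_2,\omega_3\})$ are $(0,0,0)$, $(0,1,0)$ and $(1,0,0)$, and all other entries are $0$.

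\emph{First bracketing.} Fusing $m_1,m_2$ gives singletons $(0.75,1,0)$, so $p=(0.375,0.625,0)$, while $\{\omega_3\}$ has parent sum $1$. Hence $r(1)=0$ and $m_1\circledsmall{$\pi$}m_2$ is Bayesian; the commitment $(0,1,0)$ is lost. Fusing this with $m_3$ gives $e=0.25$, singletons $(0.75,1,0)$ and layer 2 $(1,0,0)$. Now $\{\omega_3\}$ is inactive, and the result is $m(\emptyset)=0.25$, $m(\{\omega_2\})=0.1875$, $m(\{\omega_1,\omega_2\})=0.5625$.

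\emph{Second bracketing.} $m_2,m_3$ fuse to a representation ($e=0.25$, singletons $(0.75,1,1)$, layer 2 $(1,1,0)$) that round-trips exactly. Fusing it with $m_1$ gives singletons $(0.75,1,0)$ and layer 2 $(1,1,0)$. Here $\{\omega_3\}$ is active with zero capacity, so the layer is killed, and the result is $m(\emptyset)=0.25$, $m(\{\omega_1\})=0.28125$, $m(\{\omega_2\})=0.46875$.

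\textbf{What your plan does prove.} Combined with the exact cancellation above, your argument yields associativity whenever every intermediate fused profile keeps a down-closed support, meaning $\widehat I(G)>0$ implies $\widehat I(F)>0$ for all nonempty $F\subset G$. This holds, for example, when both operators are t-conorms, or both are t-norms without zero divisors. The ``closed class'' in your fallback must be this one, and it depends on P as well as C. Redefining the intermediate as the raw fused tuple would change the rule rather than prove the stated theorem.
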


\begin{proof}
	Since t-norms and t-conorms are associative, the fused isopignistic relative function is independent of the bracketing order.
	By the one-to-one correspondence between $\widehat I_m$ and $m$, the reconstructed BPAs are identical. Hence, the possibilistic combination rule satisfies associativity.
\end{proof}

\begin{theorem}[Informative monotonicity]
	The PECR satisfies informative monotonicity in the layer-wise possibilistic ordering. Specifically, define $m_1\sqsubseteq_{\varpi}m_2$
	if
	\[\begin{aligned}
		&(1-e_{m_1})\pi_{m_1}^{(1)}(\{\omega\})
		\leq
		(1-e_{m_2})\pi_{m_2}^{(1)}(\{\omega\}),
		\quad \forall \omega\in\Omega,\\
		&\pi_{m_1}^{(t)}(F)
		\leq
		\pi_{m_2}^{(t)}(F),
		\quad \forall F\in\mathcal L^{(t)},\ t\geq2.
	\end{aligned}
	\]
	If both propensity and commitment operators are t-norms, then $m_1\circledsmall{$\pi$}m_2\sqsubseteq_{\varpi}m_i,\quad i=1,2$.
	If both propensity and commitment operators are t-conorms, then $m_i\sqsubseteq_{\varpi}m_1\circledsmall{$\pi$}m_2,\quad i=1,2$.
\end{theorem}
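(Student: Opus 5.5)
The plan is to reduce the claim to the elementary pointwise inequalities satisfied by t-norms and t-conorms. The one non-trivial ingredient is that the layer-wise representation of the fused BPA $m^\circ$ coincides with the fused representation $\varpi^\circ$ from Definition~\ref{def:pecr}. Once that identification is made, the ordering $\sqsubseteq_{\varpi}$ only compares quantities computed coordinatewise from the inputs.

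\textbf{Step 1: the effective propensity of the fused BPA.} Assume first $h^\circ>0$. By Definition~\ref{def:pecr}, $e_{m^\circ}=1-h^\circ$ and $\pi^{(1)\circ}=\widetilde\pi^{(1)\circ}/h^\circ$. Hence
\[
(1-e_{m^\circ})\pi^{(1)\circ}(\{\omega\})=h^\circ\cdot\frac{\widetilde\pi^{(1)\circ}(\{\omega\})}{h^\circ}=\widetilde\pi^{(1)\circ}(\{\omega\}).
\]
So the discounted singleton profile of the fused source is exactly the raw propensity fusion
\[
\circledsmall{\rm P}\big((1-e_{m_1})\pi^{(1)}_{m_1}(\{\omega\}),(1-e_{m_2})\pi^{(1)}_{m_2}(\{\omega\})\big).
\]
For $t\ge2$ we have $\pi^{(t)\circ}=\widetilde\pi^{(t)\circ}=\circledsmall{\rm C}(\pi^{(t)}_{m_1}(F),\pi^{(t)}_{m_2}(F))$.

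\textbf{Step 2: round-trip consistency.} I need the representation recomputed from $m^\circ=\mathcal R(\widehat I_{m^\circ})$ through Definitions~\ref{iso_def}, \ref{iso_rel_def} and \ref{def:belief_poss_rep} to return $\varpi^\circ$. I will invoke the one-to-one correspondence $m\leftrightarrow I_m\leftrightarrow\widehat I_m\leftrightarrow\varpi_m$ stated after Definition~\ref{def:belief_poss_rep}, the same fact used in the proofs of idempotency and associativity, together with Theorem~\ref{thm:valid_reconstruction}, which guarantees that $m^\circ$ is a valid BPA.

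This is the step I expect to be the main obstacle. One has to check that the bottleneck scaling in Definition~\ref{iso_rel_def}, applied to $I_{m^\circ}$, undoes the reverse scaling of Definition~\ref{iso_rel_inv_def}. In particular the layer maxima and the scaling factors $s(t)$ and $r(t)$ must cancel, so that each higher layer of $\widehat I_{m^\circ}$ equals $\widetilde\pi^{(t)\circ}$. I would first try to verify this layer by layer by induction on $t$. The singleton layer and $\mathcal L^{(t)}$ fix $m^{(t)}_{BetP}$, and hence the same bottleneck $F^\star$ for both maps. If exact cancellation fails in degenerate cases (inactive channels, zero layers), I would restrict to the nondegenerate case and handle the rest by the zero-layer conventions.

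\textbf{Step 3: the inequalities.} Every t-norm satisfies $T(x,y)\le\min(x,y)$, since $T(x,y)\le T(x,1)=x$ by monotonicity and the neutral element. Every t-conorm satisfies $S(x,y)\ge\max(x,y)$. Applying these coordinatewise to the expressions in Step 1 yields both defining inequalities of $m_1\circledsmall{$\pi$}m_2\sqsubseteq_{\varpi}m_i$ in the t-norm case, and of $m_i\sqsubseteq_{\varpi}m_1\circledsmall{$\pi$}m_2$ in the t-conorm case.

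\textbf{Step 4: the case $h^\circ=0$.} The fused result is then the empty BPA, whose effective singleton propensity $(1-e)\pi^{(1)}$ is $0$. This is trivially below that of any source, which covers the t-norm case. In the t-conorm case $h^\circ=0$ forces both sources to have zero effective propensity, so the singleton inequality is an equality. For the commitment layers I take the convention of Definition~\ref{iso_rel_inv_def} that the recovered layers are inherited from $\widetilde\pi^{(t)\circ}$, so the pointwise bounds of Step 3 still apply.
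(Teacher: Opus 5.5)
Your proof is correct and follows the paper's argument. The paper's proof consists only of your Step~3 (a t-norm lies below $\min$ and a t-conorm above $\max$, applied coordinatewise), and it takes your Steps~1--2 for granted through the stated correspondence $m\leftrightarrow\widehat I_m\leftrightarrow\varpi_m$.

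On Step~2: the round trip $\widehat I_{\mathcal R(\widehat I)}=\widehat I$ is exact unless some active channel has zero transmit amount. This can happen, e.g., with a nilpotent t-norm that sends a lower-layer value to $0$ while a value above it stays positive. In that case $\mathcal R$ sets every commitment layer above that channel to zero, so you only get $\varpi_{m^\circ}\le\varpi^\circ$ coordinatewise rather than equality. That still suffices for the t-norm case. In the t-conorm case this degeneracy cannot arise: in each input's profile a zero at $F$ forces zeros at all $G\supseteq F$, and $S(x,y)=0$ forces $x=y=0$.
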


\begin{proof}
A t-norm contracts each layer-wise possibilistic representation, whereas a t-conorm expands it. Therefore, when both operators are t-norms, the fused result is less than or equal to each input under $\sqsubseteq_{\varpi}$. When both operators are t-conorms, each input is less than or equal to the fused result under $\sqsubseteq_{\varpi}$. This proves informative monotonicity.
\end{proof}

\begin{table*}[htbp!]
\caption{Properties' comparison of evidence combination rules in DST}
    \label{tab:my_tcrs}
    \centering
    \begin{tabular}{c|cccccccc}
        \hline
        Rules & $\circledsmall{$\pi$}$   & \circledsmall{$\cap$}/ \circledsmall{$\cup$}  & \circledsmall{$\wedge$}/ \circledsmall{$\vee$}  & $\sqcap_k$/ $\sqcup_k$ & $\oplus_\mathrm{D\&P}$ & $\oplus_\mathrm{Y}$ & $\oplus_\mathrm{Deng}$ & $\oplus_\mathrm{A}$\\
        \hline
        Commutativity & All & Yes & Yes & Yes & Yes & Yes & Yes & Yes \\
        \hline
        Neutrality &  Operator-dependent & Yes & No & Yes & Yes & Yes & No & No \\
        \hline
        Idempotency &  Min/Max & No & Yes & Yes & No & No & No & Yes \\
        \hline 
        Associativity &  Yes & Yes & Yes & Quasi & Quasi & Quasi & Quasi & Quasi\\
        \hline
        \tabincell{c}{Informative monotonicity} &  $\varpi$  & $Q$ / $Bel$ & $\sigma$ / $v$ & None & None & None & None & None\\
        \hline
    \end{tabular}
\end{table*}

Table \ref{tab:my_tcrs} compares the main properties of several representative evidence combination rules in DST, including the proposed possibilistic evidence  combination rule $\circledsmall{$\pi$}$, diffidence function-based rules $(\circledsmall{$\cap$},\circledsmall{$\cup$},\circledsmall{$\wedge$},\circledsmall{$\vee$})$ \cite{denoeux2008conjunctive}, distance-based idempotent rules $(\sqcap_k,\sqcup_k)$ \cite{klein2018idempotent}, Dubois and Prade's rule $\oplus_{\mathrm{D\&P}}$ \cite{dubois2016basic}, Yager's rule $\oplus_{\mathrm{Y}}$ \cite{yager1987dempster}, Deng \textit{et al.}'s rule $\oplus_{\mathrm{Deng}}$ \cite{xiao2024complex}, and Abell{\'a}n \textit{et al.}'s rule $\oplus_{\mathrm{A}}$ \cite{abellan2021combination}. 

As shown in Table \ref{tab:my_tcrs}, the proposed rule satisfies the basic requirements of an evidence combination rule more comprehensively than existing mainstream rules. It satisfies commutativity and associativity, has operator-dependent neutrality, and becomes idempotent when the min/max operators are selected. Moreover, its informative monotonicity can be explicitly characterized by the layer-wise possibilistic representation $\varpi$. In contrast, several existing rules either lack associativity, fail to provide idempotency, or do not offer a clear informative monotonicity criterion. Therefore, the proposed rule provides a more flexible and structurally interpretable combination mechanism while preserving the essential algebraic properties required for evidential information fusion.

\subsection{Advantages}
\label{subsec:further_advantages}

This subsection further discusses three advantages of the proposed PECR.

\subsubsection{Fusion with non-distinct sources on the entire belief function domain}

As discussed in Section \ref{intro}, the first motivation of this paper is to establish an evidential fusion framework for non-distinct sources that is applicable to the entire belief function domain. In contrast to CauCR and BCR in Table \ref{tab:ecr_states}, which require non-dogmatic or unnormalized mass functions, the proposed scheme is not restricted by these conditions. Consider the following Bayesian mass functions:
\[
\begin{aligned}
	m_{1}&=\{0,0.55,0.30,0,0.15,0,0,0\},\\
	m_{2}&=\{0,0.10,0.65,0,0.25,0,0,0\}.
\end{aligned}
\]
Both mass functions assign zero mass to the empty set and the total ignorance set, and thus do not satisfy the usual applicability conditions of CauCR and BCR.
For comparison, the classical Bayesian combination rule, i.e., the normalized conjunctive rule on Bayesian masses, gives
\[
m_{\rm Bayes}
=
\{0,0.1913,0.6783,0,0.1304,0,0,0\}.
\]
Its unnormalized conjunctive version is
\[
m_{\rm Bayes}^{u}
=
\{0.7125,0.0550,0.1950,0,0.0375,0,0,0\}.
\]
Using the proposed framework, when propensity operators are chosen as the minimum t-norm and the maximum t-conorm, respectively, the fused results are
\[
\begin{aligned}
	m_{\min}&=\{0.25,0.10,0.475,0,0.175,0,0,0\},\\
	m_{\max}&=\{0,0.40,0.40,0,0.20,0,0,0\}.
\end{aligned}
\]
The result $m_{\min}$ preserves the same dominant singleton as the Bayesian rule with a more conservative mass allocation. Thus, the proposed rule with minimum t-norm induces a more cautious semantics than Bayesian product rule. Together with the idempotency result in Theorem \ref{theo:ide}, this shows that the proposed framework provides an effective mechanism for fusing non-distinct sources without being restricted to non-dogmatic mass functions.

Another important observation is that the outputs remain probabilistic whenever the inputs are Bayesian. Since $m_1$ and $m_2$ contain no multi-element focal sets, the proposed fusion rule does not introduce additional masses on multi-element propositions. This indicates that the proposed representation and fusion process are compatible with the probability domain and preserve the probabilistic semantic framework when no epistemic uncertainty is provided by the sources.
\subsubsection{Clear and monotone conflict representation}

The second advantage is that the conflict produced by the proposed rule has a clear source. For normalized inputs, $e_{m_i}=m_i(\emptyset)=0$. According to Definition~\ref{def:propensity_fusion}, the fused raw singleton propensity is
\[
\widetilde\pi^{(1)\circ}(\{\omega\})
=
\circledsmall{\rm P}
\left(
\pi_{m_1}^{(1)}(\{\omega\}),\ldots,
\pi_{m_k}^{(1)}(\{\omega\})
\right).
\]
The empty set mass of the fused BPA is then determined by the height of this raw propensity profile:
\[
e_{m^\circ}
=
m^\circ(\emptyset)
=
1-\max_{\omega\in\Omega}
\widetilde\pi^{(1)\circ}(\{\omega\}).
\]

\begin{theorem}[Conflict ordering]
	\label{thm:conflict_ordering}
	For normalized input BPAs, assume that two propensity operators $\circledsmall{\rm P}_a$ and $\circledsmall{\rm P}_b$ satisfy
\[
\circledsmall{\rm P}_a(\mathbf{x})
\le
\circledsmall{\rm P}_b(\mathbf{x}),
\quad \forall \mathbf{x}\in[0,1]^k .
\]
	Let $e_a$ and $e_b$ be the empty set masses generated by the proposed rule under $\circledsmall{\rm P}_a$ and $\circledsmall{\rm P}_b$, respectively, then $e_a\ge e_b$.
\end{theorem}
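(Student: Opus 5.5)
The plan is to trace how the empty-set mass of the fused BPA depends on the propensity operator, and then use monotonicity of the maximum. First I fix the normalized inputs $m_1,\ldots,m_k$. Since $e_{m_i}=0$, Definition~\ref{def:propensity_fusion} gives discounted singleton profiles $\widetilde\pi^{(1)}_{m_i}(\{\omega\})=\pi^{(1)}_{m_i}(\{\omega\})$. For each $\omega\in\Omega$ I collect these into the vector
\[
\mathbf{x}_\omega=\bigl(\pi^{(1)}_{m_1}(\{\omega\}),\ldots,\pi^{(1)}_{m_k}(\{\omega\})\bigr)\in[0,1]^k .
\]
This vector depends only on the inputs, through the isopignistic relative functions of Definition~\ref{iso_rel_def}. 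It does not depend on which propensity operator is chosen. The point is that both runs of the rule, under $\circledsmall{\rm P}_a$ and under $\circledsmall{\rm P}_b$, act on the same arguments.

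Next I write the raw fused propensities under the two operators as $\widetilde\pi_a(\{\omega\})=\circledsmall{\rm P}_a(\mathbf{x}_\omega)$ and $\widetilde\pi_b(\{\omega\})=\circledsmall{\rm P}_b(\mathbf{x}_\omega)$. The hypothesis, applied pointwise, gives $\widetilde\pi_a(\{\omega\})\le\widetilde\pi_b(\{\omega\})$ for every $\omega$. Taking the maximum over the finite set $\Omega$ preserves this inequality, so the heights satisfy $h_a\le h_b$. By Definition~\ref{def:pecr}, $e_{m^\circ}=1-h^\circ$ in both runs, so $e_a=1-h_a\ge 1-h_b=e_b$.

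The main thing to check is that the empty-set mass of the reconstructed BPA really equals $1-h^\circ$. The reconstruction $\mathcal R$ of Definition~\ref{iso_rel_inv_def} uses the commitment layers only to build $\bar m$. In Step~5 it then sets $m(\emptyset)=\widehat I(\emptyset)=e_{m^\circ}$ directly. The commitment operator and the normalization $\pi^{(1)\circ}=\widetilde\pi^{(1)\circ}/h^\circ$ therefore do not affect $m(\emptyset)$.

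I also need to handle the degenerate case $h^\circ=0$, where the rule returns the empty BPA with $m(\emptyset)=1=1-h^\circ$. So the formula $e=1-h$ holds in all cases, including when only one of the two runs degenerates. With this, the inequality $e_a\ge e_b$ follows with no case split beyond this remark.
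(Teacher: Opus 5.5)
Your proposal is correct and follows essentially the same argument as the paper: the raw singleton propensities are ordered pointwise, the maximum over $\Omega$ preserves that order, and $e=1-h^\circ$ reverses it. The paper leaves implicit two checks that you make explicitly: that Step~5 of $\mathcal R$ sets $m(\emptyset)=\widehat I(\emptyset)$ regardless of the commitment layers, and that the degenerate case $h^\circ=0$ satisfies the same formula.
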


\begin{proof}
	For every $\omega\in\Omega$, the point-wise order of the operators gives
\[
\circledsmall{\rm P}_a\!\left((\pi_{m_i}^{(1)}(\{\omega\}))_{i=1}^k\right)
\le
\circledsmall{\rm P}_b\!\left((\pi_{m_i}^{(1)}(\{\omega\}))_{i=1}^k\right).
\]
	Taking the maximum over $\Omega$ preserves the inequality. Since the conflict is computed by one minus this maximum height, the order is reversed, which yields $e_a\ge e_b$.
\end{proof}
For comparison, using the two BPAs in Example~\ref{ex:pecr}, Table~\ref{tab:conflict_compare_caucr} lists the results of several rules on the same BPAs. In this table, the rules denoted by $T_{\mathrm{prod}}$, $S_{\mathrm{prob}}$, and $T_{\min}$ are special cases of the proposed PECR. Both the propensity operator and the commitment operator are indicated with the rule name. For example, $T_{\mathrm{prod}}$ means $\circledsmall{\rm P}=\circledsmall{\rm C}=T_{\mathrm{prod}}$.

\begin{table}[htbp]
	\centering
	\caption{Comparison of conflict and uncertainty on Example~\ref{ex:pecr}}
	\label{tab:conflict_compare_caucr}
	\renewcommand{\arraystretch}{1.15}
	\setlength{\tabcolsep}{2.5pt}
	\small
	\begin{tabular}{l|ccc|l|ccc}
		\hline
		Rules & $m(\emptyset)$ & $\mathrm{Ign}(m)$ & $H_{\rm BetP}$
		& Rules & $m(\emptyset)$ & $\mathrm{Ign}(m)$ & $H_{\rm BetP}$ \\
		\hline
		CCR
		& 0.195 & 1.064 & 1.5842
		& $T_{\mathrm{prod}}$
		& 0.136 & 1.268 & 1.5716 \\
		
		CauCR
		& 0.980 & 0.021 & 1.4688
		& $T_{\min}$
		& 0.105 & 1.344 & 1.5620 \\
		
		DCR
		& 0.000 & 2.556 & 1.5836
		& $S_{\mathrm{prob}}$
		& 0.000 & 2.586 & 1.5850 \\
		\hline
	\end{tabular}
\end{table}
As shown in Table \ref{tab:ecr_states}, at the diffidence function level, $T_{\min}$, $T_{\mathrm{prod}}$, and $S_{\mathrm{prob}}$ correspond to cautious, conjunctive, and disjunctive attitudes, respectively. The proposed PECR preserves this semantic order in the belief function domain. As shown in Table~\ref{tab:conflict_compare_caucr}, the conflict degree decreases from $0.136\rightarrow0.105\rightarrow0$, while the ignorance degree increases from $1.268\rightarrow1.344\rightarrow2.586$. This trend is consistent with the expected transition from more conjunctive to more disjunctive fusion semantics.

In contrast, CCR, CauCR, and DCR do not exhibit such a consistent semantic monotonicity. In particular, CauCR assigns almost all mass to the empty set, producing an extremely degenerate result, which does not conform to its \textit{cautious} semantics. The lower entropy of the minimum-based result should be interpreted at the decision-probability level after PPT normalization, rather than as a contradiction to the cautious semantics of the minimum operator at the representation level. In summary, the proposed fusion framework is more consistent with the corresponding triangular norm semantics.

\subsubsection{Flexible combination rules via triangular norm family}

As discussed in Section \ref{intro}, the second motivation of this paper is to provide a pathway for combining evidential information on possibilistic structure to realize the flexible combination rules via triangular norm family. Existing rules usually bind the source state and the whole mass structure to a single combination semantics. In the proposed framework, the singleton-level propensity and the high-order commitment are represented by different components and can be fused by different operators. This provides a flexible way to handle heterogeneous information, especially probability-possibility fusion.

Consider the Bayesian BPA
\[
m_{\rm prob}=\{0,0.2,0.5,0,0.3,0,0,0\},
\]
and the consonant mass function
\[
m_{\rm poss}=\{0,0,0,0,0.3,0,0.3,0.4\}.
\]
Their isopignistic relative functions are
\[
\begin{aligned}
	\widehat I_{m_{\rm prob}}
	&=
	\{0,0.6,1.0,0,0.8,0,0,0\},\\
	\widehat I_{m_{\rm poss}}
	&=
	\{0,0.4,0.7,0.308,1,0.308,1,1\}.
\end{aligned}
\]
The Bayesian BPA contains only singleton-level information, whereas the consonant mass function carries explicit higher-order structural commitment. Therefore, this example naturally reflects a heterogeneous fusion scenario: one source provides probabilistic decision tendency, while the other provides possibilistic structural uncertainty. We fix the propensity operator as the product t-norm and vary only the commitment operator. The results are shown in Table~\ref{tab:configuration_results}, where avg. indicates the average operator.

\begin{table}[!t]
	\centering
	\caption{Fusion results with $P=\mathrm{product}$ under different $C$ settings.}
	\label{tab:configuration_results}
	\small
	\setlength{\tabcolsep}{1.5pt}
	\renewcommand{\arraystretch}{1.15}
	\begin{tabular}{c|c c}
		\hline
		$\mathrm{C}$ & Fused BPA $m$ & $\mathrm{Ign}(m)$ \\
		\hline
		min
		&
		$\{0.2,0.08,0.31,0,$
		$0.41,0,0,0\}$
		& 0.8 \\
		
		avg.
		&
		$\{0.2,0.04,0.225,0.02,$
		$0.325,0.02,0.11,0.06\}$
		& 1.07 \\
		
		max
		&
		$\{0.2,0,0.14,0,$
		$0.24,0,0.18,0.24\}$
		& 1.46 \\
		\hline
	\end{tabular}
\end{table}

As shown in Table~\ref{tab:configuration_results}, three configurations produce the same empty set mass, $m(\emptyset)=0.2$. Fused results show that the proposed framework does not force probability-possibility fusion to over-depend on either side. If a conservative fusion of structural commitment is desired, the commitment operator can suppress unsupported higher-order information; if the possibilistic source is considered informative, a more permissive operator can preserve its structural uncertainty. Meanwhile, the propensity-level fusion remains fixed and continues to control the singleton decision tendency. Therefore, the proposed method allows the decision tendency and the epistemic uncertainty to be adjusted independently, providing a more diverse and flexible fusion process for heterogeneous probabilistic and possibilistic evidence.

\section{Parametric Possibilistic Evidence Combination Rule}
\label{sec:parametric-pecr}

Existing combination rules are usually designed for specific source states. In practical applications, however, the relationship among sources is often continuous and difficult to accurately characterize by a few discrete fusion modes, making it important to develop parameterized combination rules that can adaptively represent source relationships. The PECR can be generalized by replacing fixed triangular operators with parametric ones.

\subsection{Generalization through parametric triangular norms}

When the fixed fusion operators in Definition~\ref{def:pecr} are replaced by parametric triangular operators, the corresponding PECR can be regarded as a parametric PECR. Specifically, the propensity operator and the commitment operator can be selected from parametric t-norm or t-conorm families, denoted by $\circledsmall{\rm P}_{T_{\lambda_1}}$ and $\circledsmall{\rm C}_{S_{\lambda_2}}$, respectively. The resulting fused BPA is denoted as
\[
m^\circ_{T_{\lambda_1},S_{\lambda_2}}
=
\mathcal R
\left(
\widehat I^\circ_{T_{\lambda_1},S_{\lambda_2}}
\right),
\]
where $T_{\lambda_1}$ and $S_{\lambda_2}$ represent triangular operators parameterized by $\lambda$.

\begin{example}
	Consider $2$ BPAs:
	\begin{align}
		m_1=&\{0,0.18,0.12,0.05,0.22,0.08,0.15,0.20\},\\
		m_2=&\{0,0.07,0.20,0.10,0.12,0.18,0.08,0.25\}.
	\end{align}
	Using the Frank t-norm operator, we fix $\lambda_c=0.5$ and vary $\lambda_p\in(0,1)$, and then fix $\lambda_p=0.5$ and vary $\lambda_c\in(0,1)$, where $\lambda_p$ and $\lambda_c$ are parameters of propensity and commitment operators. Fig.~\ref{fig:frank_sensitivity} shows the resulting ranges of fused masses.
\end{example}

\begin{figure}[t]
	\centering
	\includegraphics[width=\columnwidth]{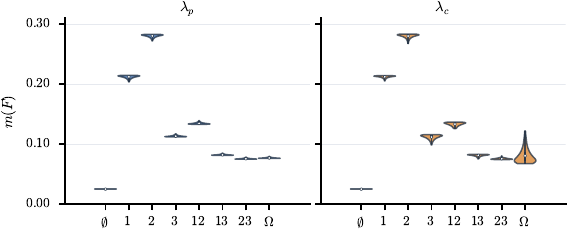}
	\caption{Variations of fused masses when the parameters change.}
	\label{fig:frank_sensitivity}
\end{figure}
As shown in Fig.~\ref{fig:frank_sensitivity}, changing the parameters of the triangular operators leads to observable variations in the fused masses. The parameter $\lambda_p$ mainly affects the singleton-level propensity fusion, while $\lambda_c$ controls the preservation and redistribution of higher-order commitment. This demonstrates that the parametric PECR forms a tunable family of fusion rules, enabling the fusion behavior to adapt continuously to different source relationships rather than being restricted to several fixed semantic modes.

\subsection{Applications for multi-view classification}

Multi-view classification \cite{huang2025esurvfusion} provides a natural test bed for demonstrating the advantage of parametric PECR. Fixed rules implicitly impose fixed assumptions on source dependence. The minimum operator reflects a cautious or redundant-view attitude, the product operator corresponds to conjunctive fusion under a more independent-source assumption, and majority voting ignores the strength of evidential support by only counting preferred classes. However, real views are usually neither fully redundant nor fully independent. To address this issue, we apply parametric PECR to multi-view classification, as shown in Fig.~\ref{application}. By tuning triangular-operator parameters, PECR can continuously adjust its fusion behavior between different fixed-rule semantics, thereby providing a more flexible mechanism for combining heterogeneous view information.

\begin{figure}[htbp!]
	\centering
	\includegraphics[width=0.49\textwidth]{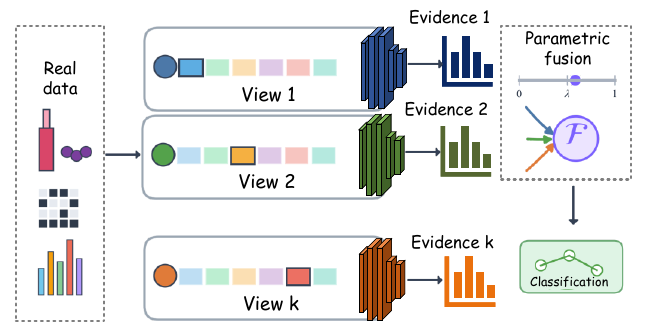}
	\caption{Overview of multi-view classification.}
	\label{application}
\end{figure}

\begin{table}[t]
	\caption{Multi-view protocols used in real-data experiments.}
	\label{tab:view_protocols}
	\centering
	\scriptsize
	\setlength{\tabcolsep}{2.5pt}
	\renewcommand{\arraystretch}{1.12}
	\begin{tabularx}{\columnwidth}{@{}l l l X@{}}
		\toprule
		Protocol & Data & Partition & Overlap setting \\
		\midrule
		Wine-C1 
		& Wine 
		& Contiguous 
		& Each view borrows one feature from each other view; sizes $7/6/6$. \\
		
		D0--4-R6 
		& Digits 0--4 
		& Round-robin 
		& Each view borrows six pixels from each other view; sizes $34/33/33$. \\
		
		D0--4-D4 
		& Digits 0--4 
		& Diagonal 
		& Each view borrows four pixels from each other view; sizes $29/30/29$. \\
		
		D5--9-R2 
		& Digits 5--9 
		& Round-robin 
		& Each view borrows two pixels from each other view; sizes $26/25/25$. \\
		
		BC-R4 
		& BreastCancer 
		& Round-robin 
		& Each view borrows four features from each other view; sizes $18/18/18$. \\
		\bottomrule
	\end{tabularx}
\end{table}

Consider a multi-view classification problem with class set
$\Omega=\{\omega_1,\omega_2,\ldots,\omega_c\}$. Each sample $x$ is represented by $k$ feature views, $x=(x^{(1)},x^{(2)},\ldots,x^{(k)})$,
where $x^{(i)}$ denotes the feature vector of the $i$-th view. For each view, a base classifier is trained independently and outputs a class-evidence vector
\[
e_i(\omega_j\mid x),\quad j=1,\ldots,c.
\]
The probabilistic output of each classifier is converted into a Bayesian BPA by $m_i(\{\omega_j\})=e_i(\omega_j\mid x)$ and $m_i(F)=0$ for $|F|\ne1$. Gaussian Naive Bayes is used as the base classifier for each view. The evaluation follows repeated stratified $5$-fold cross-validation with $5$ repeats. In each outer fold, preprocessing is performed without test leakage: the scaler is fitted only on the training split and then applied to both the training and test splits. All fusion rules are evaluated under the same train/test partitions. Table~\ref{tab:view_protocols} summarizes the $5$ multi-view protocols. Wine and BreastCancer are attribute-based datasets, while the Digits protocols are pixel-based classification tasks.

\begin{table*}[bp!]
	\caption{Real-data fusion results. Accuracy is reported as mean $\pm$ standard deviation. The best result in each row is bold.}
	\label{tab:real_fusion_results}
	\centering
	\footnotesize
	\setlength{\tabcolsep}{3.4pt}
	\renewcommand{\arraystretch}{1.18}
	\begin{tabular*}{\textwidth}{@{\extracolsep{\fill}}lccccccc@{}}
		\toprule
		Protocol & Frank & Hamacher & Min t-norm & Prod t-norm & CCR & CauCR & Majority\\
		\midrule
		Wine-C1 &
		\textbf{0.9786 $\pm$ 0.0219} &
		\textbf{0.9786 $\pm$ 0.0219} &
		0.9730 $\pm$ 0.0222 &
		0.9775 $\pm$ 0.0230 &
		0.9763 $\pm$ 0.0225 &
		0.9011 $\pm$ 0.0439 &
		0.9562 $\pm$ 0.0324\\
		D0--4-R6 &
		0.9099 $\pm$ 0.0349 &
		0.9101 $\pm$ 0.0342 &
		0.9052 $\pm$ 0.0376 &
		0.9050 $\pm$ 0.0322 &
		0.9039 $\pm$ 0.0322 &
		0.7889 $\pm$ 0.0534 &
		\textbf{0.9159 $\pm$ 0.0281}\\
		D0--4-D4 &
		0.8602 $\pm$ 0.0430 &
		\textbf{0.8611 $\pm$ 0.0433} &
		0.8599 $\pm$ 0.0431 &
		0.8393 $\pm$ 0.0452 &
		0.8311 $\pm$ 0.0450 &
		0.7239 $\pm$ 0.0528 &
		0.8222 $\pm$ 0.0383\\
		D5--9-R2 &
		0.8993 $\pm$ 0.0169 &
		\textbf{0.8998 $\pm$ 0.0170} &
		0.8991 $\pm$ 0.0168 &
		0.8817 $\pm$ 0.0230 &
		0.8794 $\pm$ 0.0261 &
		0.7190 $\pm$ 0.0378 &
		0.8283 $\pm$ 0.0258\\
		BC-R4 &
		\textbf{0.9438 $\pm$ 0.0263} &
		\textbf{0.9438 $\pm$ 0.0263} &
		0.9409 $\pm$ 0.0250 &
		0.9420 $\pm$ 0.0271 &
		0.9416 $\pm$ 0.0280 &
		0.9353 $\pm$ 0.0292 &
		0.9430 $\pm$ 0.0284\\
		\midrule
		Average &
		0.9183 $\pm$ 0.0450 &
		\textbf{0.9187 $\pm$ 0.0446} &
		0.9156 $\pm$ 0.0430 &
		0.9091 $\pm$ 0.0534 &
		0.9065 $\pm$ 0.0560 &
		0.8136 $\pm$ 0.1001 &
		0.8931 $\pm$ 0.0637\\
		\bottomrule
	\end{tabular*}
\end{table*}

Table~\ref{tab:real_fusion_results} shows that the parametric rules are more flexible and effective in the multi-view setting. Although no single fusion rule dominates all protocols, the two parametric rules achieve the best average performance. This supports the motivation that a single fixed rule is insufficient to describe all view relations, whereas a parametric operator family can adapt to different dependency patterns among views.

The selected candidates also provide useful semantic information. Product-like choices indicate that some views behave closer to independent evidence sources, while min-like or small-parameter choices suggest stronger overlap or redundancy. Thus, parametric PECR is not merely a numerical tuning strategy; it offers a data-driven way to adjust the fusion attitude and better handle heterogeneous multi-view evidence.

\section{Conclusion}
\label{con}

This paper proposed a possibilistic evidence combination framework for flexible fusion of belief functions. To address the difficulty of applying triangular operators directly to the belief function domain, we first constructed an isopignistic representation process that maps a BPA into an isopignistic relative function. Based on this representation, the singleton layer characterizes atomic propensity and the higher-order layers characterize structural commitment. The proposed PECR performs fusion in this possibilistic representation space: the propensity component and the commitment component can be fused by different triangular operators, and the fused representation can be reconstructed into a valid BPA. This design extends cautious, conjunctive, and disjunctive fusion semantics to the entire belief function domain, supports non-distinct sources through idempotent operators, and enables more flexible probability-possibility fusion. Numerical examples and real-data multi-view classification experiments show that the proposed framework can preserve structural uncertainty, reduce inappropriate conflict accumulation, and adapt fusion behavior through parametric triangular operators.

Future work will focus on both theoretical and application-oriented extensions. From the theoretical perspective, more general operator families and source-relation measures can be incorporated to automatically select suitable propensity and commitment fusion attitudes. The relationship between the proposed layer-wise possibilistic representation and other uncertainty measures, such as entropy, specificity, and dependence indices, also deserves further investigation. From the application perspective, the proposed framework can be extended to more complex multi-source scenarios, including sensor fusion, multimodal recognition, and decision systems with heterogeneous classifiers or large-scale foundation models.

\section*{Acknowledgments}
Authors are grateful to the reviewers for their valuable comments. This work is partially supported by the National Natural Science Foundation of China (Grant No. 62373078), and by the China Scholarship Council (202206070008).




\ifCLASSOPTIONcaptionsoff
  \newpage
\fi
\footnotesize
\bibliography{mybibfile}
\bibliographystyle{IEEEtran}

%









\end{document}